\definecolor{darkgreen}{rgb}{0,0.6,0}
\renewcommand{\cite}[1]{\citep{#1}}
\newcounter{lastromanpg}
\def\X{{\mathcal{X}}}
\def\L{{\mathcal{L}}}
\def\R{{\mathbb{R}}}
\def\Id{{\mathbb{I}}}
\newtheorem{theorem}{Theorem}
\newtheorem{defin}{Definition}
\newcommand{\vertiii}[1]{{\left\vert\kern-0.25ex\left\vert\kern-0.25ex\left\vert #1
    \right\vert\kern-0.25ex\right\vert\kern-0.25ex\right\vert}}
\icmltitlerunning{Sharp Minima Can Generalize For Deep Nets}
\begin{document}

\twocolumn[
\icmltitle{Sharp Minima Can Generalize For Deep Nets}

% It is OKAY to include author information, even for blind
% submissions: the style file will automatically remove it for you
% unless you've provided the [accepted] option to the icml2017
% package.

% list of affiliations. the first argument should be a (short)
% identifier you will use later to specify author affiliations
% Academic affiliations should list Department, University, City, Region, Country
% Industry affiliations should list Company, City, Region, Country

% you can specify symbols, otherwise they are numbered in order
% ideally, you should not use this facility. affiliations will be numbered
% in order of appearance and this is the preferred way.
\icmlsetsymbol{equal}{*}

\begin{icmlauthorlist}
\icmlauthor{Laurent Dinh}{udem}
\icmlauthor{Razvan Pascanu}{deepmind}
\icmlauthor{Samy Bengio}{google}
\icmlauthor{Yoshua Bengio}{udem,cifar}
\end{icmlauthorlist}

\icmlaffiliation{udem}{Universit\'e of Montr\'eal, Montr\'eal, Canada}
\icmlaffiliation{google}{Google Brain, Mountain View, United States}
\icmlaffiliation{deepmind}{DeepMind, London, United Kingdom}
\icmlaffiliation{cifar}{CIFAR Senior Fellow}

\icmlcorrespondingauthor{Laurent Dinh}{laurent.dinh@umontreal.ca}

% You may provide any keywords that you
% find helpful for describing your paper; these are used to populate
% the "keywords" metadata in the PDF but will not be shown in the document
\icmlkeywords{geometry, optimization, generalization, machine learning, ICML}

\vskip 0.3in
]

% this must go after the closing bracket ] following \twocolumn[ ...

% This command actually creates the footnote in the first column
% listing the affiliations and the copyright notice.
% The command takes one argument, which is text to display at the start of the footnote.
% The \icmlEqualContribution command is standard text for equal contribution.
% Remove it (just {}) if you do not need this facility.

\printAffiliationsAndNotice{}  % leave blank if no need to mention equal contribution
%\printAffiliationsAndNotice{\icmlEqualContribution} % otherwise use the standard text.
%\footnotetext{hi}

\begin{abstract}
    Despite their overwhelming capacity to overfit, deep learning architectures
    tend to generalize relatively well to unseen data, allowing them to be
    deployed in practice. However, explaining why this is the case is still an
    open area of research. One standing hypothesis that is gaining popularity,
    e.g. \citet{hochreiter1997flat, keskar2016large}, is that the flatness of
    minima of the loss function found by stochastic gradient based methods
    results in good generalization. This paper argues that most notions of
    flatness are problematic for deep models and can not be directly applied to
    explain generalization.  Specifically, when focusing on deep networks with
    rectifier units, we can exploit the particular geometry of parameter space
    induced by the inherent symmetries that these architectures exhibit to build
    equivalent models corresponding to arbitrarily sharper minima.
    Furthermore, if we allow to reparametrize
    a function, the geometry of its parameters can change drastically without
    affecting its generalization properties.
\end{abstract}

\section{Introduction}

{Deep learning} techniques have been very successful in several domains, like
\emph{object recognition in images}~\citep[e.g][]{krizhevsky2012imagenet,
simonyan2014very, szegedy2015going, he2016deep}, \emph{machine
translation}~\citep[e.g.][]{cho2014learning, sutskever2014sequence,
Bahdanau-et-al-ICLR2015-small, wu2016google, gehring2016convolutional} and \emph{speech
recognition}~\citep[e.g.][]{graves2013speech, HannunCCCDEPSSCN14,
chorowski2015attention, chan2015listen, collobert2016wav2letter}.
Several arguments have been brought forward to justify these empirical
results. From a representational point of view, it has been argued that
deep networks can efficiently approximate certain functions~\citep[e.g.][]{
montufar2014number, raghu2016expressive}.
Other works \citep[e.g][]{DauphinSaddle14, ChoromanskaHMAL15}
have looked at the structure of the
error surface to analyze how trainable these models are. Finally, another point of discussion is how well these models
can generalize~\citep{nesterov2008confidence, keskar2016large, zhang2016understanding}.
These correspond, respectively, to low \emph{approximation}, \emph{optimization}
and \emph{estimation} error as described by~\citet{bottou2010large}.

Our work focuses on the analysis of the estimation error. In particular,
different approaches had been used to look at the question of why
\emph{stochastic gradient descent} results in
solutions that generalize well~\citep{bottou-lecun-2004a, bottou-bousquet-2008}. For
example, \citet{duchi2011adaptive,
nesterov2008confidence, hardt2015train, bottou2016optimization, gonen2017fast}
rely on the concept of \emph{stochastic
approximation} or \emph{uniform stability}~\citep{bousquet2002stability}. Another conjecture that was recently~\citep{keskar2016large}
explored, but that could be traced back to~\citet{hochreiter1997flat},
relies on the geometry of the
loss function around a given solution. It argues that flat minima, for some
definition of flatness, lead to better
generalization.  Our work focuses on this particular conjecture, arguing that
there are critical issues when applying the concept of flat minima to deep neural
networks, which require rethinking what flatness actually means.

While the concept of flat minima is not well defined, having slightly different
meanings in different works, the intuition is relatively simple.  If one
imagines the error as a one-dimensional curve, a minimum is flat if there is a
wide region around it with roughly the same error, otherwise the minimum is
sharp. When moving to higher dimensional spaces, defining flatness becomes more
complicated. In~\citet{hochreiter1997flat} it is defined as the size of the
connected region around the minimum where the training loss is relatively
similar. \citet{chaudhari2016entropy} relies, in contrast, on the curvature of
the second order structure around the minimum, while \citet{keskar2016large}
looks at the maximum loss in a bounded neighbourhood of the minimum.  All these
works rely on the fact that flatness results in robustness to low precision
arithmetic or noise in the parameter space, which, using an
\emph{minimum description length}-based argument, suggests a low expected overfitting.

However, several common architectures and parametrizations in deep learning are
already at odds with this conjecture, requiring at least some degree of
refinement in the statements made. In particular, we show how the
geometry of the associated parameter space can alter the ranking between
prediction functions when considering several measures of
\emph{flatness/sharpness}. We believe the reason for this contradiction stems from
the Bayesian arguments about KL-divergence made to justify the generalization
ability of flat minima~\citep{hinton1993keeping}.  Indeed, {\em Kullback-Liebler divergence is invariant to change of parameters whereas the notion of "flatness" is not}.
The demonstrations of \citet{hochreiter1997flat} are approximately based on a
Gibbs formalism and rely on strong assumptions and approximations that can
compromise the applicability of the argument, including the assumption of a
discrete function space.

\section{Definitions of flatness/sharpness}
\begin{figure}[h]
\vspace{15pt}
   \centering \includegraphics[width=.45\textwidth]{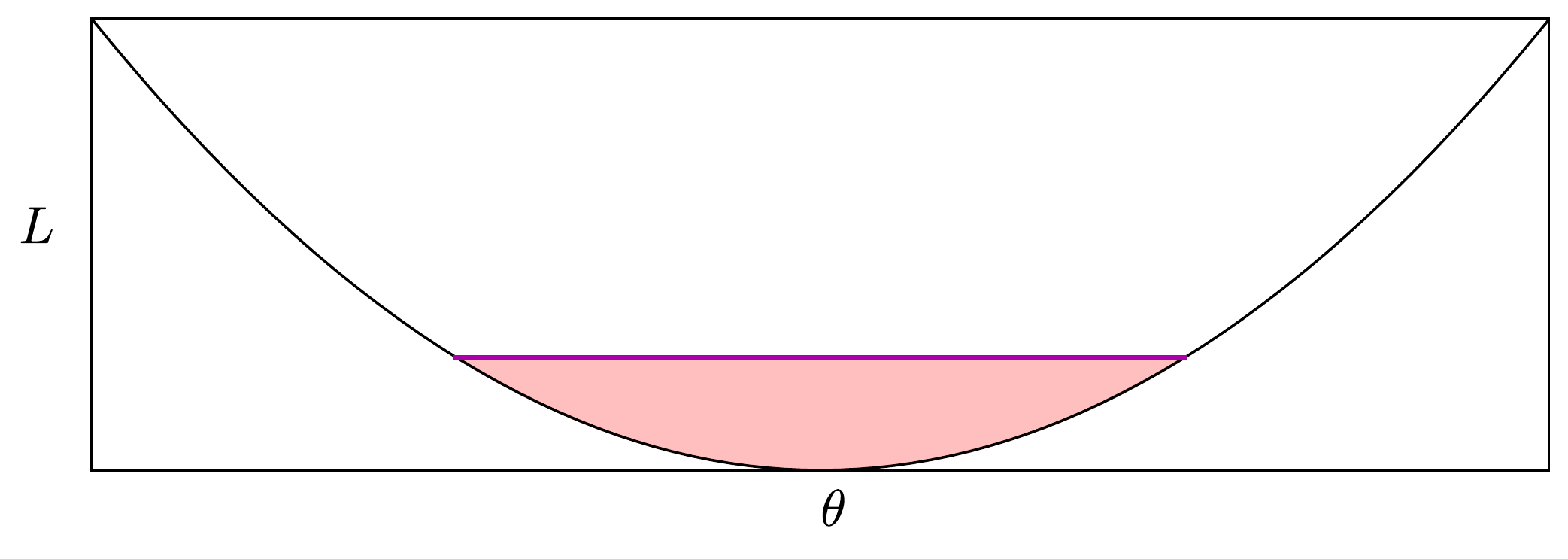}
   \caption{An illustration of the notion of flatness. The loss $L$ as
            a function of $\theta$ is plotted in black. If the height of the
            red area is $\epsilon$, the width will represent the
            volume $\epsilon$-flatness. If the width is $2 \epsilon$, the
            height will then represent the $\epsilon$-sharpness.
            Best seen with colors. }
   \label{fig:flatness}
   \vspace{15pt}
\end{figure}
\label{sec:flatness}

For conciseness, we will restrict ourselves to supervised scalar output
problems, but several conclusions in this paper can apply to other problems as
well. We will consider a function $f$ that takes as input an element $x$ from
an input space $\X$ and outputs a scalar $y$. We will denote by $f_{\theta}$
the prediction function. This prediction function will be parametrized by a
parameter vector $\theta$ in a parameter space $\Theta$. Often, this prediction
function will be over-parametrized and two parameters $(\theta, \theta') \in
\Theta^2$ that yield the same prediction function everywhere, $\forall x \in
\X, f_{\theta}(x) = f_{\theta'}(x)$, are called \emph{observationally
equivalent}. The model is trained to minimize a continuous loss function $L$ which
takes as argument the prediction function $f_{\theta}$.
We will often think of the loss $L$ as a function of $\theta$ and
adopt the notation $L(\theta)$.

The notion of flatness/sharpness of a minimum is relative, therefore we will
discuss metrics that can be used to compare the relative flatness between two
minima. In this section we will formalize three used definitions of flatness
in the literature.

\citet{hochreiter1997flat} defines a flat minimum as "a large connected region
in weight space where the error remains approximately constant". We interpret
this formulation as follows:
\begin{defin}
Given $\epsilon > 0$, a minimum $\theta$, and a loss $L$,
    we define $C(L, \theta, \epsilon)$ as
    the largest (using inclusion as the partial order over the subsets of
    $\Theta$) connected set containing $\theta$ such that $\forall \theta' \in
    C(L, \theta, \epsilon), L(\theta') < L(\theta) + \epsilon$. The
    $\epsilon$-flatness will be defined as the volume of $C(L, \theta,
    \epsilon)$. We will call this measure the volume $\epsilon$-flatness.
\end{defin}
In Figure~\ref{fig:flatness}, $C(L, \theta, \epsilon)$ will be the purple line at
the top of the red area if the height is $\epsilon$ and its volume will simply be
the length of the purple line.

Flatness can also be defined using the local curvature of the loss function
around the minimum if it is a critical point \footnote{In this paper, we will often assume that is the case when dealing with Hessian-based measures in order to have them well-defined.}.
\citet{chaudhari2016entropy, keskar2016large} suggest
that this information is encoded in the eigenvalues of the Hessian.
However, in order to compare how flat one minimum versus another,
the eigenvalues need to be reduced to a single number. Here we consider
the \emph{spectral norm and trace of the Hessian}, two typical measurements
of the eigenvalues of a matrix.

Additionally \citet{keskar2016large} defines the notion of
$\epsilon$-sharpness. In order to make proofs more readable, we will slightly
modify their definition. However, because of norm equivalence in finite
dimensional space, our results will transfer to the original definition
in full space as well. Our modified definition is the following:

\begin{defin}
Let $B_2(\epsilon, \theta)$ be an Euclidean ball centered on a minimum $\theta$
    with radius $\epsilon$. Then, for a non-negative valued loss function $L$, the
    $\epsilon$-sharpness will be defined as proportional to
\begin{align*}
\frac{\max_{\theta' \in B_2(\epsilon, \theta)}\big(L(\theta') - L(\theta)\big)}{1 + L(\theta)}.
\end{align*}
\end{defin}
In Figure~\ref{fig:flatness}, if the width of the red area is $2 \epsilon$ then
the height of the red area is
$\max_{\theta' \in B_2(\epsilon, \theta)}\big(L(\theta') - L(\theta)\big)$.

$\epsilon$-sharpness can be related to the spectral norm of the Hessian. Indeed, a second-order Taylor expansion of $L$ around a critical point minimum is written
\begin{align*}
L(\theta') = ~ &L(\theta) + \frac{1}{2} ~ (\theta' - \theta) ~ (\nabla^2 L)(\theta) (\theta' - \theta)^T \\
& ~ + o(\|\theta' - \theta\|_2^2).
\end{align*}
In this second order approximation, the $\epsilon$-sharpness at $\theta$ would be
\begin{align*}
\frac{\vertiii{(\nabla^{2} L)(\theta)}_2 \epsilon^2}{2\big(1 + L(\theta)\big)} .
\end{align*}

% \newpage
\section{Properties of Deep Rectified Networks}
\label{sec:deep-relu}

Before moving forward to our results, in this section we first introduce the notation used in
the rest of paper.  Most of our results, for clarity,  will be on the deep
rectified feedforward networks with a linear output layer that we describe
below, though they can easily be extended to other architectures (e.g.
convolutional, etc.).

\begin{defin}
    Given $K$ weight matrices $(\theta_k)_{k \leq K}$ with $n_k =
    \text{dim}\big(\text{vec}(\theta_k)\big)$ and $n = \sum_{k=1}^{K}{n_k}$,
    the output $y$ of a deep rectified feedforward networks with a linear output layer is:
\begin{align*}
    y &= \phi_{rect}\Big(\phi_{rect}\big(\cdots \phi_{rect}(x \cdot \theta_{1}) \cdots \big) \cdot \theta_{K - 1}\Big) \cdot \theta_K,
\end{align*}
where
    \begin{itemize}
        \item $x$ is the input to the model, a high-dimensional vector
        \item $\phi_{rect}$ is the rectified elementwise activation
            function~\citep{jarrett2009best, nair2010rectified, glorot2011deep},
            which is the positive part $(z_i)_i~\mapsto~(\max(z_i, 0))_i$.
        \item \text{vec} reshapes a matrix into a vector.
    \end{itemize}
\end{defin}

Note that in our definition we excluded the bias terms, usually found in any
neural architecture. This is done mainly for convenience, to simplify the rendition
of our arguments. However, the arguments can be extended to the case that includes biases (see Appendix~\ref{app:bias}).
Another choice is that of the linear output layer. Having an output
activation function does not affect our argument either: since the loss is a
function of the output activation, it can be rephrased as a function of linear
pre-activation.

Deep rectifier models have certain properties that allows us
in section~\ref{sec:flat} to arbitrary manipulate the flatness of a minimum.

\begin{figure}[h]
\vspace{15pt}
   \centering \includegraphics[width=.45\textwidth]{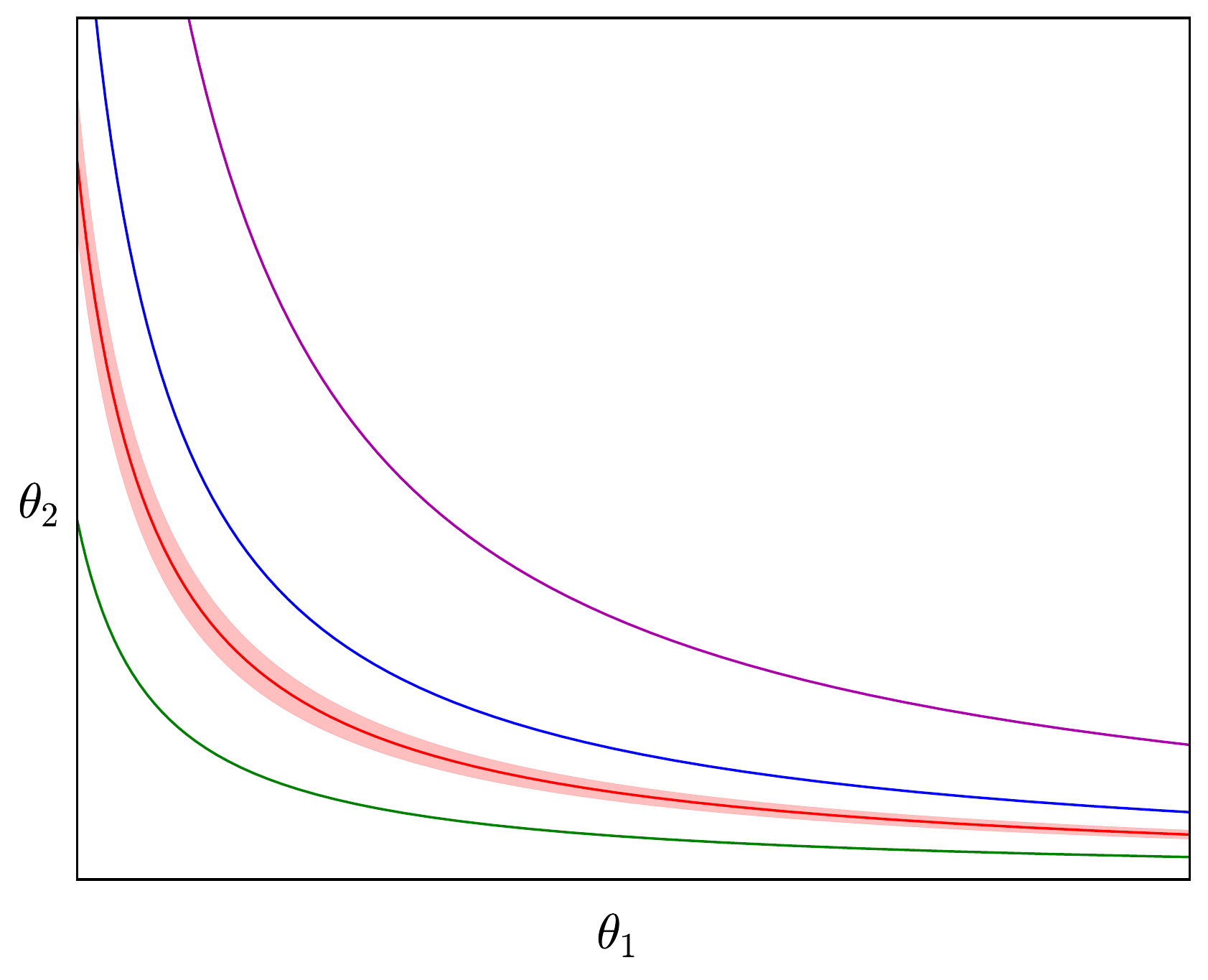}
   \caption{An illustration of the effects of non-negative homogeneity.  The
    graph depicts level curves of the behavior of the loss $L$ embedded into
    the two dimensional parameter space with the axis given by $\theta_1$ and
    $\theta_2$. Specifically, each line of a given color corresponds to the
    parameter assignments $(\theta_1, \theta_2)$ that result observationally in
    the same prediction function $f_{\theta}$. Best seen with colors. }
   \label{fig:hyperbolic}
   \vspace{15pt}
\end{figure}

An important topic for optimization of neural networks is understanding the
non-Euclidean geometry of the parameter space as imposed by the neural
architecture \citep[see, for example][]{amari1998natural}.  In principle, when we take a
step in parameter space what we expect to control is the change in the
behavior of the model (i.e. the mapping of the input $x$ to the output $y$).
In principle we are not interested in the parameters per se, but rather only in
the mapping they represent.

If one defines a measure for the change in the behavior of the model, which
can be done under some assumptions, then, it can be used to define, at any
point in the parameter space, a metric that says what is the equivalent change
in the parameters for a unit of change in the behavior of the model. As it
turns out, for neural networks, this metric is not constant over $\Theta$.
Intuitively, the metric is related to the curvature, and since neural networks
can be highly non-linear, the curvature will not be constant. See
\citet{amari1998natural,PascanuNatural14} for more details. Coming back to the concept
of flatness or sharpness of a minimum, this metric should define the flatness.

However, the geometry of the parameter space is more complicated. Regardless of
the measure chosen to compare two instantiations of a neural network, because of
the structure of the model, it also exhibits a large number of symmetric
configurations that result in exactly the same behavior.  Because the
rectifier activation has the non-negative homogeneity property, as we will see
shortly, one can construct a continuum of points that lead to the same
behavior, hence the metric is singular. Which means that one can exploit these
directions in which the model stays unchanged to shape the neighbourhood around
a minimum in such a way that, by most definitions of flatness, this property can
be controlled. See Figure~\ref{fig:hyperbolic} for a visual depiction, where the
flatness (given here as the distance between the different level curves) can be
changed by moving along the curve.

Let us redefine, for convenience, the \emph{non-negative homogeneity}
property~\citep{neyshabur2015path,bottou2016icml} below. Note that beside this property, the
reason for studying the rectified linear activation is for its widespread
adoption~\citep{krizhevsky2012imagenet, simonyan2014very, szegedy2015going,
he2016deep}.
\begin{defin}
  A given a function $\phi$ is \emph{non-negative homogeneous} if
$$\forall (z, \alpha) \in \R \times \R^{+}, \phi(\alpha z) = \alpha
\phi(z)$$.
\end{defin}

\begin{theorem}
    The rectified function $\phi_{rect}(x) = \max(x,0)$ is non-negative homogeneous.
\end{theorem}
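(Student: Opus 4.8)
The plan is to unwind the definition directly: I must show that for every $z \in \R$ and every $\alpha \in \R^{+}$ the identity $\phi_{rect}(\alpha z) = \alpha\, \phi_{rect}(z)$ holds, where $\phi_{rect}(x) = \max(x, 0)$. The single fact that drives the whole argument is that multiplication by a non-negative scalar is an order-preserving operation on $\R$ and therefore commutes with the pointwise maximum. Concretely, for $\alpha \geq 0$ one has $\max(\alpha a, \alpha b) = \alpha \max(a, b)$, and applying this with $a = z$ and $b = 0$ gives $\max(\alpha z, 0) = \alpha \max(z, 0)$, which is exactly the claim.

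First I would fix an arbitrary $z \in \R$ and an arbitrary $\alpha \in \R^{+}$ and split into two cases according to the sign of $z$. If $z \geq 0$, then $\alpha z \geq 0$ as well (a product of two non-negatives), so $\phi_{rect}(\alpha z) = \alpha z$ while $\phi_{rect}(z) = z$, and hence $\alpha\, \phi_{rect}(z) = \alpha z = \phi_{rect}(\alpha z)$. If instead $z < 0$, then $\alpha z \leq 0$, so $\phi_{rect}(\alpha z) = 0$, and since $\phi_{rect}(z) = 0$ we also obtain $\alpha\, \phi_{rect}(z) = 0$; the two sides agree once more. Because $z$ and $\alpha$ were arbitrary, this establishes the identity for all admissible pairs.

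There is no genuine obstacle here: the statement is elementary, and the only point requiring a moment of care is the boundary behaviour. One should check that the conventions remain consistent at $z = 0$ and, depending on whether $\R^{+}$ is taken to include $0$, at $\alpha = 0$; in both edge cases both sides of the identity vanish, so the case analysis above already covers them. The reason this near-trivial lemma is worth isolating is not its difficulty but its consequence: it is precisely this homogeneity that the later construction in Section~\ref{sec:flat} exploits to rescale the weight matrices of successive layers against one another, moving along the level curves of Figure~\ref{fig:hyperbolic} without altering the function the network computes.
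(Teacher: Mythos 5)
Your proof is correct and follows essentially the same route as the paper's: both arguments rest on the fact that multiplication by a positive scalar preserves sign, so the two cases $z \geq 0$ and $z < 0$ each reduce to a trivial identity. Your version merely spells out the case analysis that the paper compresses into a single sentence, and your remark about the edge cases $z = 0$ and $\alpha = 0$ is a harmless (and sensible) addition.
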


\begin{proof}
    Follows trivially from the constraint that $\alpha>0$, given that $x>0 \Rightarrow \alpha x > 0$, iff $\alpha >0$.

\end{proof}
For a deep rectified neural network it means that:
\begin{align*}
\phi_{rect}\big(x \cdot (\alpha \theta_{1})\big) \cdot \theta_{2} &= \phi_{rect}(x \cdot \theta_{1}) \cdot (\alpha \theta_{2}),
\end{align*}
meaning that for this one (hidden) layer neural network, the parameters
$(\alpha \theta_{1}, \theta_{2})$ is observationally equivalent to
$(\theta_{1}, \alpha \theta_{2})$. This observational equivalence similarly holds for convolutional layers.

Given this non-negative homogeneity, if
$(\theta_{1}, \theta_{2}) \neq (0, 0)$ then $\big\{(\alpha \theta_{1},
\alpha^{-1} \theta_{2}), \alpha > 0 \big\}$ is an infinite set of
observationally equivalent parameters, inducing a strong non-identifiability in
this learning scenario.
Other models like \emph{deep linear networks}~\citep{SaxeMG13},
\emph{leaky rectifiers}~\citep{he2015delving} or
\emph{maxout networks}~\citep{goodfellow2013maxout} also have this non-negative
homogeneity property.

In what follows we will rely on such transformations, in particular we will rely on
the following definition:
\begin{defin}
  For a single hidden layer rectifier feedforward network we define the family of transformations
    $$T_{\alpha}:(\theta_1, \theta_2) \mapsto (\alpha \theta_1, \alpha^{-1} \theta_2)$$
 which we refer to as a $\alpha$-scale transformation.
\end{defin}

Note that a $\alpha$-scale transformation will not affect the generalization, as the
behavior of the function is identical. Also while the transformation is only defined
for a single layer rectified feedforward network, it can trivially be extended to any
architecture having a single rectified network as a submodule, e.g. a deep rectified
feedforward network. For simplicity
and readability we will rely on this definition.

\section{Deep Rectified networks and flat minima}
\label{sec:flat}

In this section we exploit the resulting strong non-identifiability to showcase
a few shortcomings of some definitions of flatness. Although $\alpha$-scale
transformation does not affect the function represented, it allows us to
significantly decrease several measures of flatness. For another definition
of flatness, $\alpha$-scale transformation show that all minima
are equally flat.

\subsection{Volume $\epsilon$-flatness}
\begin{theorem}
For a one-hidden layer rectified neural network of the form
\begin{align*}
y = \phi_{rect}(x \cdot \theta_{1}) \cdot \theta_{2},
\end{align*}
and a minimum $\theta = (\theta_1, \theta_2)$, such that $\theta_1 \neq 0$ and $\theta_2 \neq 0$, $\forall \epsilon > 0$ $C(L, \theta, \epsilon)$ has an infinite volume.
\end{theorem}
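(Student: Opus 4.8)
The plan is to exploit the $\alpha$-scale transformations $T_\alpha$, which leave the prediction function — and hence the loss — invariant, and to show that a suitably fattened version of the orbit $\{T_\alpha(\theta):\alpha>0\}$ lies entirely inside the sublevel set $\{L<L(\theta)+\epsilon\}$ while carrying infinite Lebesgue volume. First I would record the one-dimensional skeleton: since each $T_\alpha$ produces an observationally equivalent network, $L\big(T_\alpha(\theta)\big)=L(\theta)<L(\theta)+\epsilon$ for every $\alpha>0$. The orbit $O=\{(\alpha\theta_1,\alpha^{-1}\theta_2):\alpha>0\}$ is the continuous image of $(0,\infty)$, hence connected, and contains $\theta$ at $\alpha=1$; therefore $O\subseteq C(L,\theta,\epsilon)$. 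This alone does not finish the proof, because $O$ is a curve of zero volume, so the real work is to keep a positive-width tube around $O$ inside the connected component.

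The crux is to control the width of that tube. Around the orbit point $T_\alpha(\theta)=(\alpha\theta_1,\alpha^{-1}\theta_2)$ I would consider the box
\[
B_\alpha=\{(\theta_1',\theta_2'):\|\theta_1'-\alpha\theta_1\|\le c\alpha,\ \|\theta_2'-\alpha^{-1}\theta_2\|\le c\alpha^{-1}\},
\]
whose two factors have radii $c\alpha$ and $c\alpha^{-1}$. Although the first factor grows with $\alpha$, the induced change in output stays uniformly small: writing the output difference from $f_\theta$ and inserting $\pm\,\phi_{rect}(x\cdot\alpha\theta_1)\cdot\theta_2'$, the $1$-Lipschitzness of $\phi_{rect}$ and the bound $\phi_{rect}(z)\le|z|$ give, for every $(\theta_1',\theta_2')\in B_\alpha$ and every input $x$,
\[
\|f_{\theta'}(x)-f_{\theta}(x)\|\le \|x\|\,c\,\big(\|\theta_1\|+\|\theta_2\|+c\big),
\]
since the factors $\alpha$ and $\alpha^{-1}$ cancel term by term. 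Thus for $c$ small the output perturbation is uniformly below any prescribed threshold, and by continuity of $L$ at $f_\theta$ (taking the maximum of $\|x\|$ over the training inputs) we obtain $B_\alpha\subseteq\{L<L(\theta)+\epsilon\}$ for all $\alpha$ at once.

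Finally I would assemble the volume. The union $U=\bigcup_{\alpha>0}B_\alpha$ is open, contains the connected orbit $O$ together with $\theta$, and lies in the sublevel set, so $U\subseteq C(L,\theta,\epsilon)$. Since $\mathrm{vol}(B_\alpha)\propto(c\alpha)^{n_1}(c\alpha^{-1})^{n_2}\propto\alpha^{n_1-n_2}$, I would extract a sparse geometric subfamily $\alpha_k=R^{\pm k}$ that is pairwise disjoint (possible for small $c$ precisely because $\theta_1\neq0$ and $\theta_2\neq0$) and sum volumes: letting $\alpha\to\infty$ yields a divergent series when $n_1\ge n_2$, while letting $\alpha\to0$ yields a divergent series when $n_2\ge n_1$. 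As one of these inequalities always holds, $C(L,\theta,\epsilon)$ has infinite volume.

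The main obstacle is exactly the uniform output bound of the second step: naively the boxes $B_\alpha$ blow up as $\alpha\to\infty$, so one must see the precise cancellation between the enlargement of $\theta_1$ and the contraction of $\theta_2$ that keeps $f_{\theta'}$ close to $f_\theta$ simultaneously for all $\alpha$. A secondary subtlety is that a single scaling direction only produces a divergent volume series when $n_1\ge n_2$; selecting the direction $\alpha\to\infty$ or $\alpha\to0$ according to the sign of $n_1-n_2$ is what makes the conclusion hold for arbitrary layer widths.
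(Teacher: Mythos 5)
Your proof is correct and follows essentially the same route as the paper: fatten the orbit $\{T_\alpha(\theta)\}_{\alpha>0}$ with boxes whose factor radii scale as $\alpha$ and $\alpha^{-1}$, note that the box volume scales as $\alpha^{n_1-n_2}$, and then either let a single box blow up (when $n_1\neq n_2$) or sum a pairwise-disjoint geometric family of constant-volume boxes (when $n_1=n_2$), using $\theta_1\neq 0$ (resp.\ $\theta_2\neq 0$) to separate the copies. The one real difference is your treatment of what you call the crux: your tube box $B_\alpha$ is exactly $T_\alpha\big(B_\infty(c,\theta)\big)$, so every point of $B_\alpha$ is observationally equivalent to a point of the base box $B_\infty(c,\theta)$, and the inclusion $B_\alpha\subseteq\{L<L(\theta)+\epsilon\}$ is immediate once $c$ is chosen small enough that $B_\infty(c,\theta)\subseteq C(L,\theta,\epsilon)$ --- no output-perturbation estimate is needed. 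This is how the paper argues, and it buys two things: it skips the Lipschitz computation entirely, and it needs only continuity of $L$ in $\theta$, whereas your bound $\|f_{\theta'}(x)-f_\theta(x)\|\le\|x\|\,c\,(\|\theta_1\|+\|\theta_2\|+c)$ still has to be converted into a statement about the loss, which implicitly assumes $L$ is continuous as a function of the network outputs on the training inputs (a mild but genuinely extra hypothesis relative to the paper's stated continuity in parameter space). Your estimate is itself correct --- the term-by-term cancellation of $\alpha$ and $\alpha^{-1}$ that you exhibit is precisely the pointwise observational equivalence in disguise --- so this is a valid, if more laborious, substitute for the paper's one-line invariance argument.
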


We will not consider the solution $\theta$ where any of the weight matrices
$\theta_1, \theta_2$ is zero, $\theta_{1} = 0$ or $\theta_{2} = 0$, as it
results in a constant function which we will assume to give poor training
performance. For $\alpha > 0$, the $\alpha$-scale transformation $T_{\alpha}:(\theta_{1},
\theta_{2}) \mapsto (\alpha \theta_{1}, \alpha^{-1} \theta_{2})$ has Jacobian
determinant $\alpha^{n_1 - n_2}$, where once again $n_1 = dim\big(\text{vec}(\theta_1)\big)$ and $n_2 = dim\big(\text{vec}(\theta_2)\big)$. Note that the Jacobian determinant of this
linear transformation is
the change in the volume induced by
$T_{\alpha}$ and $T_{\alpha} \circ T_{\beta} = T_{\alpha \beta}$.
We show below that there is a
connected region containing $\theta$ with infinite volume and where the error
remains approximately constant.

\begin{proof}
We will first introduce a small region with approximately constant error around $\theta$ with non-zero volume. Given $\epsilon > 0$ and if we consider the loss function continuous with respect to the parameter, $C(L, \theta, \epsilon)$ is an open set containing $\theta$.
Since we also have $\theta_1 \neq 0$ and $\theta_2 \neq 0$, let $r > 0$ such that the $\L_{\infty}$ ball $B_{\infty}(r, \theta)$ is in $C(L, \theta, \epsilon)$ and has empty intersection with $\{\theta', \theta'_1 = 0\}$. Let $v = (2r)^{n_1 + n_2} > 0$ the volume of $B_{\infty}(r, \theta)$.

Since the Jacobian determinant of $T_{\alpha}$ is the multiplicative change of induced by $T_{\alpha}$, the volume of $T_{\alpha}\big(B_{\infty}(r, \theta)\big)$ is $v \alpha^{n_1 - n_2}$. If $n_1 \neq n_2$, we can arbitrarily grow the volume of $T_{\alpha}\big(B_{\infty}(r, \theta)\big)$, with error within an $\epsilon$-interval of $L(\theta)$, by having $\alpha$ tends to $+\infty$ if $n_1 > n_2$ or to $0$ otherwise.

If $n_1 = n_2$, $\forall \alpha' > 0, T_{\alpha'}\Big(B_{\infty}(r, \theta)\Big)$ has volume $v$. Let $C' = \mathop{\bigcup}_{\alpha' > 0} T_{\alpha'}\Big(B_{\infty}(r, \theta)\Big)$. $C'$ is a connected region where the error remains approximately constant, i.e. within an $\epsilon$-interval of $L(\theta)$.

\begin{figure}[h]
\vspace{15pt}
   \centering \includegraphics[width=.45\textwidth]{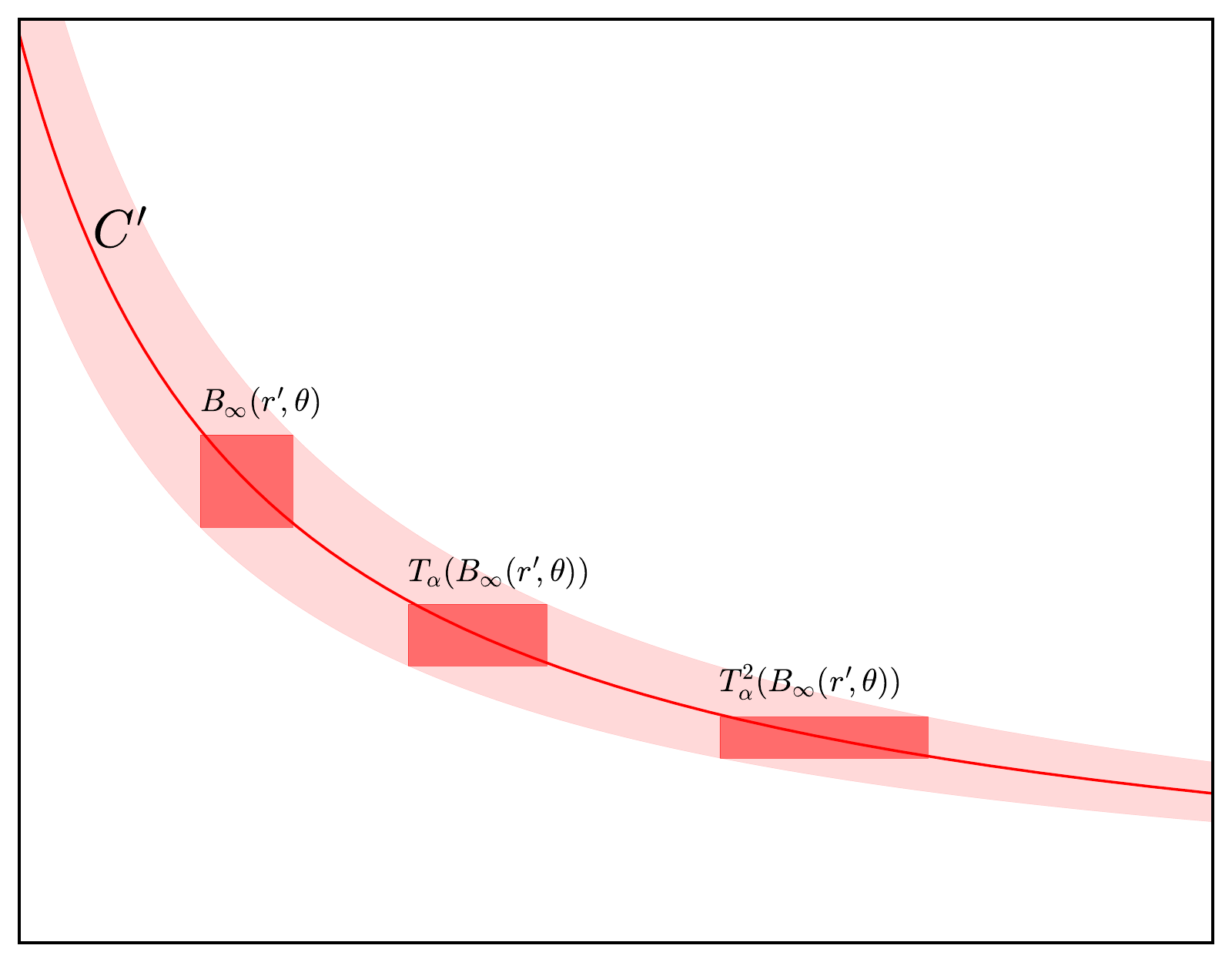}
   \caption{An illustration of how we build different disjoint volumes using $T_{\alpha}$. In this two-dimensional example, $T_{\alpha}\big(B_{\infty}(r', \theta)\big)$ and $B_{\infty}(r', \theta)$ have the same volume. $B_{\infty}(r', \theta), T_{\alpha}\big(B_{\infty}(r', \theta)\big), T_{\alpha}^{2}\big(B_{\infty}(r', \theta)\big), \dots$ will therefore be a sequence of disjoint constant volumes. $C'$ will therefore have an infinite volume. Best seen with colors.
    }
  \vspace{15pt}
   \label{fig:volumes}
\end{figure}

Let $\alpha = 2\frac{\|\theta_1\|_{\infty} + r}{\|\theta_1\|_{\infty} - r}$. Since
\begin{align*}
B_{\infty}(r, \theta) = B_{\infty}(r, \theta_1) \times B_{\infty}(r, \theta_2),
\end{align*}
where $\times$ is the Cartesian set product, we have
\begin{align*}
T_{\alpha}\big(B_{\infty}(r, \theta)\big) = B_{\infty}(\alpha r, \alpha \theta_1) \times B_{\infty}(\alpha^{-1} r, \alpha^{-1} \theta_2).
\end{align*}
Therefore, $T_{\alpha}\big(B_{\infty}(r, \theta)\big) \cap B_{\infty}(r, \theta) = \emptyset$ (see Figure~\ref{fig:volumes}).

Similarly, $B_{\infty}(r, \theta), T_{\alpha}\big(B_{\infty}(r, \theta)\big), T_{\alpha}^{2}\big(B_{\infty}(r, \theta)\big), \dots$ are disjoint and have volume $v$. We have also $T_{\alpha}^{k}\big(B_{\infty}(r', \theta)\big) = T_{\alpha^{k}}\big(B_{\infty}(r', \theta)\big) \in C'$. The volume of $C'$ is then lower bounded by $0 < v + v + v + \cdots$ and is therefore infinite. $C(L, \theta, \epsilon)$ has then infinite volume too, making the volume $\epsilon$-flatness of $\theta$ infinite.
\end{proof}

This theorem can generalize to rectified neural networks in general with a similar proof. Given that every minimum has an infinitely large region (volume-wise) in which the error remains approximately constant, that means that every minimum would be infinitely flat according to the volume $\epsilon$-flatness. Since all minima are equally flat, it is not possible to use volume $\epsilon$-flatness to gauge the generalization property of a minimum.

\subsection{Hessian-based measures}
\label{sec:gradient-hessian}

The non-Euclidean geometry of the parameter space, coupled with the manifolds
of observationally equal behavior of the model, allows one to move from one
region of the parameter space to another, changing the curvature of the model
without actually changing the function. This approach has been used with
success to improve optimization, by moving from a region of high curvature to a
region of well behaved curvature~\citep[e.g.][]{DesjardinsNatural15,
salimans2016weight}.  In this section we look at two widely used measures of
the Hessian, the spectral radius and trace, showing that either of these values
can be manipulated without actually changing the behavior of the function.
If the flatness of a minimum is defined by any of these quantities, then it could
also be easily manipulated.
\begin{theorem}
The gradient and Hessian of the loss $L$ with respect to $\theta$ can be modified
by $T_{\alpha}$.
\end{theorem}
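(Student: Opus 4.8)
The plan is to treat the $\alpha$-scale transformation as a linear reparametrization and track how first and second differentiation interact with it. For the single hidden layer case I would write $T_\alpha$ in matrix form as $M_\alpha = \diag(\alpha I_{n_1}, \alpha^{-1} I_{n_2})$, so that $T_\alpha(\theta) = M_\alpha \theta$ once $\theta = \big(\text{vec}(\theta_1), \text{vec}(\theta_2)\big)$ is viewed as a single vector. The starting point is the observational equivalence already established from non-negative homogeneity: for \emph{every} $\theta$ in parameter space, $f_{T_\alpha(\theta)} = f_\theta$, so the loss obeys the functional identity $L = L \circ T_\alpha$ on all of $\Theta$, not merely at the minimum. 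Because this is an identity between functions rather than an equality of values at one point, I can differentiate both sides.

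First I would differentiate once. Applying the chain rule to $L(\theta) = L(M_\alpha \theta)$ and using that $M_\alpha$ is symmetric gives $\nabla L(\theta) = M_\alpha \, (\nabla L)(M_\alpha \theta)$, so at the transformed point $\eta = T_\alpha(\theta)$ the gradient reads $(\nabla L)(\eta) = M_\alpha^{-1} \nabla L(\theta)$, i.e. its blocks scale as $\alpha^{-1}$ and $\alpha$. Differentiating a second time yields the Hessian rule $(\nabla^2 L)(\theta) = M_\alpha \, (\nabla^2 L)(\eta) \, M_\alpha$, equivalently $(\nabla^2 L)(\eta) = M_\alpha^{-1} (\nabla^2 L)(\theta) M_\alpha^{-1}$. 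Writing the Hessian with blocks $H_{ij} = \nabla^2_{\theta_i,\theta_j} L(\theta)$, congruence by $M_\alpha^{-1} = \diag(\alpha^{-1} I, \alpha I)$ gives
\begin{align*}
(\nabla^2 L)(\eta) = \begin{pmatrix} \alpha^{-2} H_{11} & H_{12} \\ H_{21} & \alpha^{2} H_{22} \end{pmatrix}.
\end{align*}
This exhibits the claimed modification explicitly: the off-diagonal blocks are untouched while the diagonal blocks are rescaled by $\alpha^{\pm 2}$, so taking $\alpha \to \infty$ drives both $\vertiii{(\nabla^2 L)(\eta)}_2$ and $\Tr\big((\nabla^2 L)(\eta)\big)$ to infinity (whenever the corresponding block is nonzero), which is exactly what Section~\ref{sec:gradient-hessian} needs to manipulate the spectral-norm and trace measures of sharpness.

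The main obstacle is not the algebra but justifying the differentiation. I would need to argue that $L \circ T_\alpha = L$ can be treated as a smooth identity in a neighborhood of the minimum, which means staying away from the non-differentiable kinks of $\phi_{rect}$; this is consistent with the paper's standing assumption that the relevant minima are critical points at which Hessian-based quantities are well defined. A secondary point is careful bookkeeping: since $\theta_1, \theta_2$ are matrices, $\diag(\alpha I_{n_1}, \alpha^{-1} I_{n_2})$ must act on the correctly vectorized blocks, and the chain-rule congruence $H \mapsto M_\alpha^\T H M_\alpha$ must be applied with the right transpose. Finally, although $T_\alpha$ is defined only for one hidden layer, the same congruence argument applies to any architecture containing such a sublayer, so the conclusion extends to deep rectified networks.
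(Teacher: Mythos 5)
Your proposal is correct and follows essentially the same route as the paper: both start from the functional identity $L = L \circ T_{\alpha}$ (valid on all of $\Theta$ by observational equivalence) and apply the chain rule once and twice with the linear map $\diag(\alpha \Id_{n_1}, \alpha^{-1}\Id_{n_2})$ to obtain the gradient rule and the Hessian congruence $(\nabla^2 L)\big(T_{\alpha}(\theta)\big) = M_{\alpha}^{-1}(\nabla^2 L)(\theta)M_{\alpha}^{-1}$. Your explicit block form of the transformed Hessian and the differentiability caveat are useful elaborations, but they do not change the underlying argument, which matches the paper's.
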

\begin{proof}
\begin{align*}
L(\theta_1, \theta_2) = L(\alpha \theta_{1}, \alpha^{-1} \theta_{2}),
\end{align*}
we have then by differentiation
\begin{align*}
(\nabla L)(\theta_1, \theta_2) = (\nabla L)(\alpha \theta_{1}, \alpha^{-1} \theta_{2}) \left[\begin{array}{cc}
\alpha \Id_{n_1} & 0 \\
0 & \alpha^{-1} \Id_{n_2}
\end{array} \right] \\
\Leftrightarrow (\nabla L)(\alpha \theta_{1}, \alpha^{-1} \theta_{2}) = (\nabla L)(\theta_1, \theta_2) \left[\begin{array}{cc}
\alpha^{-1} \Id_{n_1} & 0 \\
0 & \alpha \Id_{n_2}
\end{array} \right]
\end{align*}
and
\begin{align*}
&(\nabla^{2} L)(\alpha \theta_{1}, \alpha^{-1} \theta_{2}) \\
&= \left[\begin{array}{cc}
\alpha^{-1} \Id_{n_1} & 0 \\
0 & \alpha \Id_{n_2}
\end{array} \right] (\nabla^{2} L)(\theta_1, \theta_2) \left[\begin{array}{cc}
\alpha^{-1} \Id_{n_1} & 0 \\
0 & \alpha \Id_{n_2}
\end{array} \right].
\end{align*}
\end{proof}
\paragraph{Sharpest direction} Through these transformations we can easily
find, for any critical point which is a minimum with non-zero Hessian, an
observationally equivalent parameter whose Hessian has an arbitrarily large
spectral norm.

\begin{theorem}
For a one-hidden layer rectified neural network of the form
\begin{align*}
y = \phi_{rect}(x \cdot \theta_{1}) \cdot \theta_{2},
\end{align*}
and critical point $\theta = (\theta_1, \theta_2)$ being a minimum for $L$, such that $(\nabla^{2} L)(\theta) \neq 0$, $\forall M > 0, \exists \alpha > 0, \vertiii{(\nabla^{2} L)\big(T_{\alpha}(\theta)\big)}_2 \geq M$ where $\vertiii{(\nabla^{2} L)\big(T_{\alpha}(\theta)\big)}_2$ is the spectral norm of $(\nabla^{2} L)\big(T_{\alpha}(\theta)\big)$.
\end{theorem}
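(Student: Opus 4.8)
The plan is to use the transformation rule for the Hessian established in the previous theorem and show that the spectral norm can be driven to infinity as $\alpha \to 0$ or $\alpha \to +\infty$. From that theorem we have
\begin{align*}
(\nabla^{2} L)\big(T_{\alpha}(\theta)\big) = D_{\alpha} \, (\nabla^{2} L)(\theta) \, D_{\alpha}, \quad D_{\alpha} = \left[\begin{array}{cc} \alpha^{-1} \Id_{n_1} & 0 \\ 0 & \alpha \Id_{n_2} \end{array}\right].
\end{align*}
The key observation is that the blocks scale asymmetrically: as $\alpha \to 0$, the top-left $(n_1, n_1)$ block of the Hessian gets multiplied by $\alpha^{-2}$, which blows up, while as $\alpha \to +\infty$ the bottom-right block gets multiplied by $\alpha^{2}$. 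So the strategy is to exhibit a test vector whose Rayleigh quotient under $(\nabla^{2} L)\big(T_{\alpha}(\theta)\big)$ diverges, using the lower bound $\vertiii{A}_2 \geq |u^{\T} A u| / \|u\|_2^2$ for any nonzero $u$.

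The main steps are as follows. First, since $(\nabla^{2} L)(\theta) \neq 0$, at least one of its entries is nonzero; I would pick a standard basis vector $e_i$ (or a pair of them) isolating a nonzero entry of the Hessian. Concretely, if a diagonal entry $\big((\nabla^2 L)(\theta)\big)_{ii}$ lying in the first block is nonzero, then taking $u = e_i$ gives a Rayleigh quotient of order $\alpha^{-2}$, which tends to $+\infty$ as $\alpha \to 0$; symmetrically, a nonzero diagonal entry in the second block diverges as $\alpha \to +\infty$. If all the divergent-direction diagonal entries happen to vanish, I would instead select an off-diagonal nonzero entry $H_{ij}$ and use $u = e_i \pm e_j$ so that the quadratic form picks up the cross term $2 H_{ij}$ scaled by the product of the relevant diagonal entries of $D_{\alpha}$; choosing the index $i$ in the first block makes this scale like $\alpha^{-2}$ as well. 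Finally, given $M > 0$, I would choose $\alpha$ small enough (respectively large enough) that the resulting Rayleigh quotient exceeds $M$, which by the Rayleigh characterization of the spectral norm yields $\vertiii{(\nabla^{2} L)\big(T_{\alpha}(\theta)\big)}_2 \geq M$.

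The main obstacle is a bookkeeping issue rather than a conceptual one: I must guarantee that the test vector I pick actually lands in a block (or a cross-term) that gets amplified rather than shrunk. Because the two blocks scale in opposite directions ($\alpha^{-2}$ versus $\alpha^{2}$), a poorly chosen direction could leave the Rayleigh quotient bounded. The clean way to handle this is to note that since the Hessian is symmetric and nonzero, the $\alpha^{-2}$-scaled contribution of the first-block rows cannot identically cancel unless the entire first block row of the Hessian vanishes; if it does vanish, the argument transfers verbatim to the second block with $\alpha \to +\infty$. Since the full Hessian is nonzero, at least one of the two regimes must produce an unbounded Rayleigh quotient, and one can simply run the argument in whichever direction works. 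I would state this case split explicitly to keep the proof airtight, then conclude by the norm-equivalence remark that the same conclusion holds for the original full-space definition.
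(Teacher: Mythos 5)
Your high-level strategy coincides with the paper's: write $(\nabla^2 L)\big(T_\alpha(\theta)\big) = D_\alpha \, (\nabla^2 L)(\theta) \, D_\alpha$, locate an entry that is amplified as $\alpha \to 0$ (or $\alpha \to +\infty$), and convert it into a lower bound on the spectral norm --- your Rayleigh-quotient bound is a perfectly good substitute for the paper's Frobenius-norm-plus-norm-equivalence step, and your cases where some diagonal entry of the Hessian is nonzero are correct. However, your fallback case contains a genuine error. Writing $H = (\nabla^2 L)(\theta)$, the entries transform as $(D_\alpha H D_\alpha)_{ij} = d_i d_j H_{ij}$, with $d_i = \alpha^{-1}$ on the first block and $d_i = \alpha$ on the second. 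So an off-diagonal entry with $i$ in the first block is scaled by $\alpha^{-2}$ only when $j$ is \emph{also} in the first block; when $j$ lies in the second block the factor is $\alpha^{-1}\cdot\alpha = 1$ and the entry is invariant. Your closing claim that the $\alpha^{-2}$-scaled contribution ``cannot identically cancel unless the entire first block row of the Hessian vanishes'' fails for the same reason: only the top-left diagonal block carries the factor $\alpha^{-2}$, while the cross block (which is part of the first-block rows) carries the factor $1$. Concretely, for a symmetric matrix supported only on the cross blocks,
\begin{align*}
H = \left[\begin{array}{cc} 0 & B \\ B^{\T} & 0 \end{array}\right], \qquad B \neq 0,
\end{align*}
all diagonal entries and both diagonal blocks vanish, yet $H \neq 0$, and $D_\alpha H D_\alpha = H$ for every $\alpha > 0$: the spectral norm is constant in $\alpha$, neither of your two regimes produces an unbounded Rayleigh quotient, and the conclusion fails for any $M > \vertiii{H}_2$.

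What rescues the theorem --- and what your proof never invokes --- is the hypothesis that $\theta$ is a \emph{minimum}. There the Hessian is positive semi-definite, and a nonzero matrix of the cross-block form above is indefinite (its eigenvalues are $\pm$ the singular values of $B$), so this case cannot arise. Equivalently, and this is exactly the paper's opening step: a nonzero positive semi-definite symmetric matrix has positive trace (the sum of its non-negative eigenvalues), hence has some diagonal entry $\gamma > 0$; that entry lies in one of the two blocks, which puts you back in your good cases, sending $\alpha \to 0$ or $\alpha \to +\infty$ accordingly. Inserting this single observation closes the gap and makes your Rayleigh-quotient version of the proof complete.
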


\begin{proof}
The trace of a symmetric matrix is the sum of its eigenvalues and a real symmetric matrix can be diagonalized in $\R$, therefore if the Hessian is non-zero, there is one non-zero positive diagonal element. Without loss of generality, we will assume that this non-zero element of value $\gamma > 0$ corresponds to an element in $\theta_1$. Therefore the Frobenius norm $\vertiii{(\nabla^{2} L)\big(T_{\alpha}(\theta)\big)}_{F}$ of
\begin{align*}
&(\nabla^{2} L)(\alpha \theta_{1}, \alpha^{-1} \theta_{2}) \\
&= \left[\begin{array}{cc}
\alpha^{-1} \Id_{n_1} & 0 \\
0 & \alpha \Id_{n_2}
\end{array} \right] (\nabla^{2} L)(\theta_1, \theta_2) \left[\begin{array}{cc}
\alpha^{-1} \Id_{n_1} & 0 \\
0 & \alpha \Id_{n_2}
\end{array} \right].
\end{align*}
is lower bounded by $\alpha^{-2} \gamma$.

Since all norms are equivalent in finite dimension, there exists a constant $r > 0$ such that $r \vertiii{A}_{F} \leq \vertiii{A}_{2}$ for all symmetric matrices $A$. So by picking $\alpha < \sqrt{\frac{r \gamma}{M}}$, we are guaranteed that $\vertiii{(\nabla^{2} L)\big(T_{\alpha}(\theta)\big)}_2 \geq M$.
\end{proof}

Any minimum with non-zero Hessian will be observationally equivalent to a
minimum whose Hessian has an arbitrarily large spectral norm. Therefore for any
minimum in the loss function, if there exists another minimum that generalizes
better then there exists another minimum that generalizes better and is also
sharper according the spectral norm of the Hessian. The spectral norm of
critical points' Hessian becomes as a result less relevant as a measure of
potential generalization error. Moreover, since the spectral norm lower bounds
the trace for a positive semi-definite symmetric matrix, the same conclusion
can be drawn for the trace.

\paragraph{Many directions} However, some notion of sharpness might take into
account the entire eigenspectrum of the Hessian as opposed to its largest
eigenvalue, for instance, \citet{chaudhari2016entropy} describe the notion of
\emph{wide valleys}, allowing the presence of very few large eigenvalues. We
can generalize the transformations between observationally equivalent
parameters to deeper neural networks with $K - 1$ hidden layers: for $\alpha_k
> 0, T_{\alpha}: (\theta_k)_{k \leq K} \mapsto (\alpha_k \theta_k)_{k \in K}$
with $\prod_{k=1}^{K}{\alpha_k} = 1$. If we define
\begin{align*}
D_{\alpha} = \left[\begin{array}{cccc}
\alpha_{1}^{-1} \Id_{n_1} & 0 & \cdots & 0 \\
0 & \alpha_{2}^{-1} \Id_{n_2} & \cdots & 0 \\
\vdots & \vdots & \ddots & \vdots \\
0 & 0 & \cdots & \alpha_{K}^{-1} \Id_{n_K}
\end{array} \right]
\end{align*}
then the first and second derivatives at $T_{\alpha}(\theta)$ will be
\begin{align*}
(\nabla L)\big(T_{\alpha}(\theta)\big) =& (\nabla L)(\theta) D_{\alpha} \\
(\nabla^2 L)\big(T_{\alpha}(\theta)\big) =& D_{\alpha} (\nabla^2 L)(\theta) D_{\alpha}.
\end{align*}
We will show to which extent you can increase several eigenvalues of $(\nabla^2 L)\big(T_{\alpha}(\theta)\big)$ by varying $\alpha$.

\begin{defin}
For each $n \times n$ matrix $A$, we define the vector $\lambda(A)$ of sorted singular values of $A$ with their multiplicity $\lambda_1(A) \geq \lambda_2(A) \geq \cdots \geq \lambda_n(A)$.
\end{defin}
If $A$ is symmetric positive semi-definite, $\lambda(A)$ is also the vector of its sorted eigenvalues.

\begin{theorem}
For a $(K - 1)$-hidden layer rectified neural network of the form
\begin{align*}
y &= \phi_{rect}(\phi_{rect}(\cdots \phi_{rect}(x \cdot \theta_{1}) \cdots ) \cdot \theta_{K - 1}) \cdot \theta_K,
\end{align*}
and critical point $\theta = (\theta_k)_{k \leq K}$ being a minimum for $L$, such that $(\nabla^{2} L)(\theta)$ has rank $r = \text{rank}\big((\nabla^2 L)(\theta)\big)$, $\forall M > 0, \exists \alpha > 0$ such that $\Big(r - \min_{k \leq K}(n_k)\Big)$ eigenvalues are greater than $M$.
\end{theorem}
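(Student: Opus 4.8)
The plan is to exploit the identity $(\nabla^2 L)\big(T_{\alpha}(\theta)\big) = D_{\alpha}\,(\nabla^2 L)(\theta)\,D_{\alpha}$ proved above, together with the fact that at a minimum the matrix $H := (\nabla^2 L)(\theta)$ is symmetric positive semi-definite of rank $r$. Because $D_{\alpha}$ is an invertible diagonal matrix, the transformed Hessian is congruent to $H$, hence itself positive semi-definite of rank $r$; its singular values therefore coincide with its eigenvalues, and exactly $r$ of them are nonzero. It thus suffices to choose $\alpha = (\alpha_k)_{k \leq K}$ with $\prod_{k} \alpha_k = 1$ so that the $S$-th largest eigenvalue of $D_\alpha H D_\alpha$ exceeds $M$, where $S := r - \min_{k} n_k$.

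First I would single out a smallest block, $j_0 \in \argmin_{k} n_k$, and amplify every block except $j_0$: set $\alpha_k = t$ for $k \neq j_0$ and $\alpha_{j_0} = t^{-(K-1)}$, which respects $\prod_k \alpha_k = 1$, and send $t \to 0^{+}$. With this choice $D_\alpha$ has diagonal entries $t^{-1}$ on every block except $j_0$, so I expect the quadratic form $v \mapsto v^{\T} D_\alpha H D_\alpha v$ to grow like $t^{-2}$ on a subspace whose dimension is dictated by the rank of $H$ once block $j_0$ is deleted.

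To turn this into a bound I would use the Courant--Fischer characterization of $\lambda_S$. Let $U_0 = \{u : u_{j_0} = 0\}$ and let $H_{U_0}$ be the principal submatrix of $H$ with the rows and columns of block $j_0$ removed. Writing $H = B^{\T} B$ with $B$ of full row rank $r$, deleting the $n_{j_0}$ columns of block $j_0$ lowers the rank by at most $n_{j_0}$, so $\text{rank}(H_{U_0}) \geq r - n_{j_0} = S$; since $H_{U_0} \succeq 0$, its $S$-th largest eigenvalue $\mu$ is strictly positive and, crucially, independent of $t$. Taking $W \subseteq U_0$ to be the span of the top $S$ eigenvectors of $H_{U_0}$ and setting $V = D_{\alpha}^{-1}(W)$, any $v = D_{\alpha}^{-1} u$ with $u \in W$ satisfies $\|v\|^2 = t^2 \|u\|^2$ (because $u_{j_0} = 0$) and $v^{\T} D_\alpha H D_\alpha v = u^{\T} H u \geq \mu \|u\|^2$, so the Rayleigh quotient is at least $\mu/t^2$ on the $S$-dimensional space $V$. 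Min-max then yields $\lambda_S\big(D_\alpha H D_\alpha\big) \geq \mu/t^2$, and choosing $t < \sqrt{\mu/M}$ pushes the top $S$ eigenvalues above $M$.

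The delicate step is the bookkeeping of the last paragraph: one must check that removing a block of size $n_{j_0}$ costs at most $n_{j_0}$ in rank, so that at least $S$ positive directions survive, and that these surviving directions are exactly the ones left unsuppressed by $D_{\alpha}^{-1}$, so that the $t^{-2}$ growth of the numerator is not cancelled by a matching growth of $\|v\|^2$. Restricting to $U_0$ is precisely what decouples these two effects; it is also what explains the appearance of $\min_k n_k$, since discarding the smallest block is the cheapest way to satisfy $\prod_k \alpha_k = 1$ while amplifying all remaining coordinates.
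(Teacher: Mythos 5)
Your proposal is correct, and it reaches the paper's conclusion by a genuinely different spectral argument. The paper makes the same scaling choice as you (amplify every block except a smallest one: $\alpha_k = \beta^{-1}$ for $k < K$, $\alpha_K = \beta^{K-1}$, i.e.\ your $t = \beta^{-1}$), but then invokes the multiplicative Horn inequalities of \citet{klyachko2000random} on the singular values of the product $\sqrt{(\nabla^2 L)(\theta)}\, D_\alpha$, obtaining the index-shifted bound $\lambda_{i - n_K}\big((\nabla^2 L)(\theta) D_\alpha^2\big) \geq \lambda_i\big((\nabla^2 L)(\theta)\big)\beta^2$ and then specializing to $i \leq r$. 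You instead prove the needed bound $\lambda_S\big(D_\alpha H D_\alpha\big) \geq \mu / t^2$ directly from Courant--Fischer, by exhibiting the explicit $S$-dimensional subspace $V = D_\alpha^{-1}(W)$ with $W$ spanned by the top eigenvectors of the principal submatrix $H_{U_0}$, and you certify $\mu = \lambda_S(H_{U_0}) > 0$ by the elementary rank count $H = B^{\T}B$, deleting $n_{j_0}$ columns costs at most $n_{j_0}$ in rank. Your route is more self-contained: it replaces a deep cited result (Klyachko's theorem) with standard linear algebra, and it makes transparent \emph{why} the deficit is exactly $\min_k n_k$ --- the suppressed block is the only place where the $t^{-2}$ amplification of the quadratic form can be cancelled, and restricting to $U_0$ removes it. What the paper's approach buys in exchange is a family of bounds for every index pair $(i,j)$ at once, relating each $\lambda_{i-n_K}$ of the transformed Hessian to the corresponding $\lambda_i$ of the original one, rather than a single uniform lower bound on the top $S$ eigenvalues; for the theorem as stated, however, the two arguments deliver exactly the same conclusion. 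One trivial point worth making explicit in a final write-up: the argument (like the paper's) tacitly assumes $S = r - \min_k n_k \geq 1$, the statement being vacuous otherwise.
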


\begin{proof}
For simplicity, we will note $\sqrt{M}$ the \emph{principal square root} of a symmetric positive-semidefinite matrix $M$. The eigenvalues of $\sqrt{M}$ are the square root of the eigenvalues of $M$ and are its \emph{singular values}. By definition, the \emph{singular values} of $\sqrt{(\nabla^2 L)(\theta)} D_{\alpha}$ are the square root of the eigenvalues of $D_{\alpha} (\nabla^2 L)(\theta) D_{\alpha}$. %By matrix equivalence, we know that the eigenvalue spectrum of $D_{\alpha} (\nabla^2 L)(\theta) D_{\alpha}$ is the same as that of $(\nabla^2 L)(\theta) D_{\alpha}^{2}$.
Without loss of generality, we consider $\min_{k \leq K}(n_k) = n_K$ and choose $\forall k < K, \alpha_k = \beta^{-1}$ and $\alpha_K = \beta^{K-1}$. Since $D_{\alpha}$ and $\sqrt{(\nabla^2 L)(\theta)}$ are positive symmetric semi-definite matrices, we can apply the multiplicative Horn inequalities~\citep{klyachko2000random} on singular values of the product $\sqrt{(\nabla^2 L)(\theta)} D_{\alpha}$:
\begin{align*}
\forall i \leq n, &j \leq (n - n_K), \\
&\lambda_{i + j - n}\big((\nabla^2 L)(\theta) D_{\alpha}^{2}\big) \geq \lambda_{i}\big((\nabla^2 L)(\theta)\big) \beta^{2}.
\end{align*}
By choosing $\beta > \sqrt{\frac{M}{\lambda_{r}\big((\nabla^2 L)(\theta)\big)}}$, since we have $\forall i~\leq~r, \lambda_{i}\big((\nabla^2 L)(\theta)\big) \geq \lambda_{r}\big((\nabla^2 L)(\theta)\big) > 0$ we can conclude that
\begin{align*}
\forall i \leq (r - n_K),& \\
\lambda_{i}\big((\nabla^2 L)(\theta) D_{\alpha}^{2}\big) &\geq \lambda_{i + n_k}\big((\nabla^2 L)(\theta)\big) \beta^{2} \\
&\geq \lambda_{r}\big((\nabla^2 L)(\theta)\big) \beta^{2} > M.
\end{align*}
\end{proof}

It means that there exists an observationally equivalent parameter with at
least $\Big(r - \min_{k \leq K}(n_k) \Big)$
arbitrarily large eigenvalues. Since~\citet{sagun2016singularity} seems to
suggests that rank deficiency in the Hessian is due to over-parametrization
of the model, one could conjecture that $\Big(r - \min_{k \leq K}(n_k)\Big)$ can be high for thin and deep neural networks, resulting in a majority of large eigenvalues. Therefore, it would still be possible to obtain an equivalent parameter with large Hessian eigenvalues, i.e. sharp in multiple directions.

\subsection{$\epsilon$-sharpness}
\begin{figure}[h]
\vspace{15pt}
   \centering \includegraphics[width=.45\textwidth]{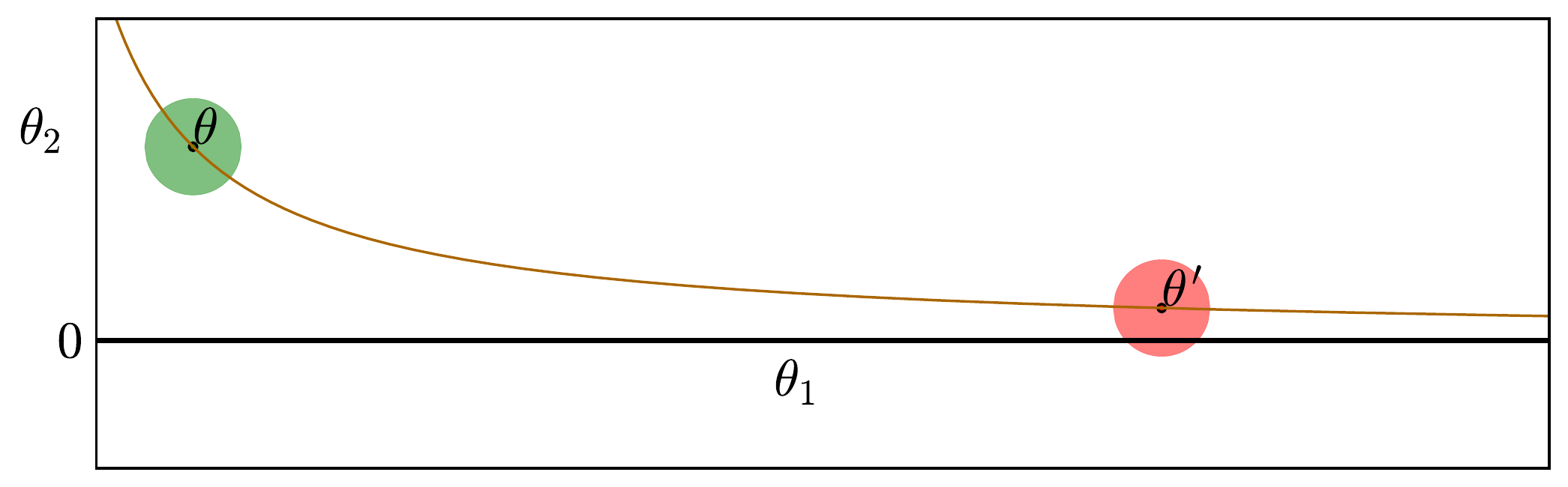}
   \caption{An illustration of how we exploit non-identifiability and its
            particular geometry to obtain sharper
            minima: although $\theta$ is far from the $\theta_2 = 0$ line,
            the observationally equivalent parameter $\theta'$ is closer.
            The green and red circle centered on each of these points have the
            same radius.
            Best seen with colors. }
   \label{fig:epsilon_sharpness}
   \vspace{15pt}
\end{figure}
\label{sec:sharpness}
We have redefined for $\epsilon > 0$ the $\epsilon$-sharpness of~\citet{keskar2016large} as follow
\begin{align*}
\frac{\max_{\theta' \in B_2(\epsilon, \theta)}\big(L(\theta') - L(\theta)\big)}{1 + L(\theta)}
\end{align*}
where $B_2(\epsilon, \theta)$ is the Euclidean ball of radius $\epsilon$ centered on $\theta$.
This modification will demonstrate more clearly the issues of that metric as a
measure of probable generalization. If we use $K = 2$ and $(\theta_1,
\theta_2)$ corresponding to a non-constant function, i.e. $\theta_1 \neq 0$ and
$\theta_2 \neq 0$, then we can define $\alpha =
\frac{\epsilon}{\|\theta_1\|_2}$. We will now consider the observationally
equivalent parameter $T_{\alpha}(\theta_1, \theta_2) = (\epsilon
\frac{\theta_1}{\|\theta_1\|_2}, \alpha^{-1} \theta_2)$. Given that
$\|\theta_1\|_2 \leq \|\theta\|_2$, we have that $(0, \alpha^{-1} \theta_2) \in
B_2\big(\epsilon, T_{\alpha}(\theta)\big)$, making the maximum loss in this neighborhood at least
as high as the best constant-valued function, incurring relatively high
sharpness. Figure~\ref{fig:epsilon_sharpness} provides a visualization of the proof.

For rectified neural network every minimum is observationally equivalent to a minimum that generalizes as well but with high $\epsilon$-sharpness. This also applies when using the \emph{full-space} $\epsilon$-sharpness used by~\citet{keskar2016large}. We can prove this similarly using the equivalence of norms in finite dimensional vector spaces and the fact that for $c > 0, \epsilon > 0, \epsilon \leq \epsilon (c + 1)$ (see~\citet{keskar2016large}). We have not been able to show a similar problem with \emph{random subspace} $\epsilon$-sharpness used by~\citet{keskar2016large}, i.e. a restriction of the maximization to a random subspace, which could relate to the notion of \emph{wide valleys} described by~\citet{chaudhari2016entropy}. \\

By exploiting the non-Euclidean geometry and non-identifiability of rectified neural
networks, we were able to demonstrate some of the limits of using typical
definitions of minimum's flatness as core explanation for generalization.

\section{Allowing reparametrizations}
\label{sec:reparametrization}

In the previous section~\ref{sec:flat} we explored the case of a fixed parametrization,
that of deep rectifier models. In this section we demonstrate a simple observation.
If we are allowed to change the parametrization of some function $f$, we can obtain
arbitrarily different geometries without affecting how the function evaluates on
unseen data. The same holds for reparametrization of the input space.
The implication is that the correlation between the geometry of the parameter space (and
hence the error surface) and the behavior of a given function is meaningless if not
preconditioned on the specific parametrization of the model.

\subsection{Model reparametrization}
\label{sec:model-reparam}

One thing that needs to be considered when relating flatness of minima to their
probable generalization is that the choice of parametrization and its
associated geometry are arbitrary. Since we are interested in finding a
prediction function in a given family of functions, no reparametrization of
this family should influence generalization of any of these functions. Given a
bijection $g$ onto $\theta$, we can define new transformed parameter $\eta =
g^{-1}(\theta)$. Since $\theta$ and $\eta$ represent in different space the
same prediction function, they should generalize as well.

\begin{figure}[h]
\vspace{15pt}
   \centering \subfigure[Loss function with default parametrization]{
     \includegraphics[width=.45\textwidth]{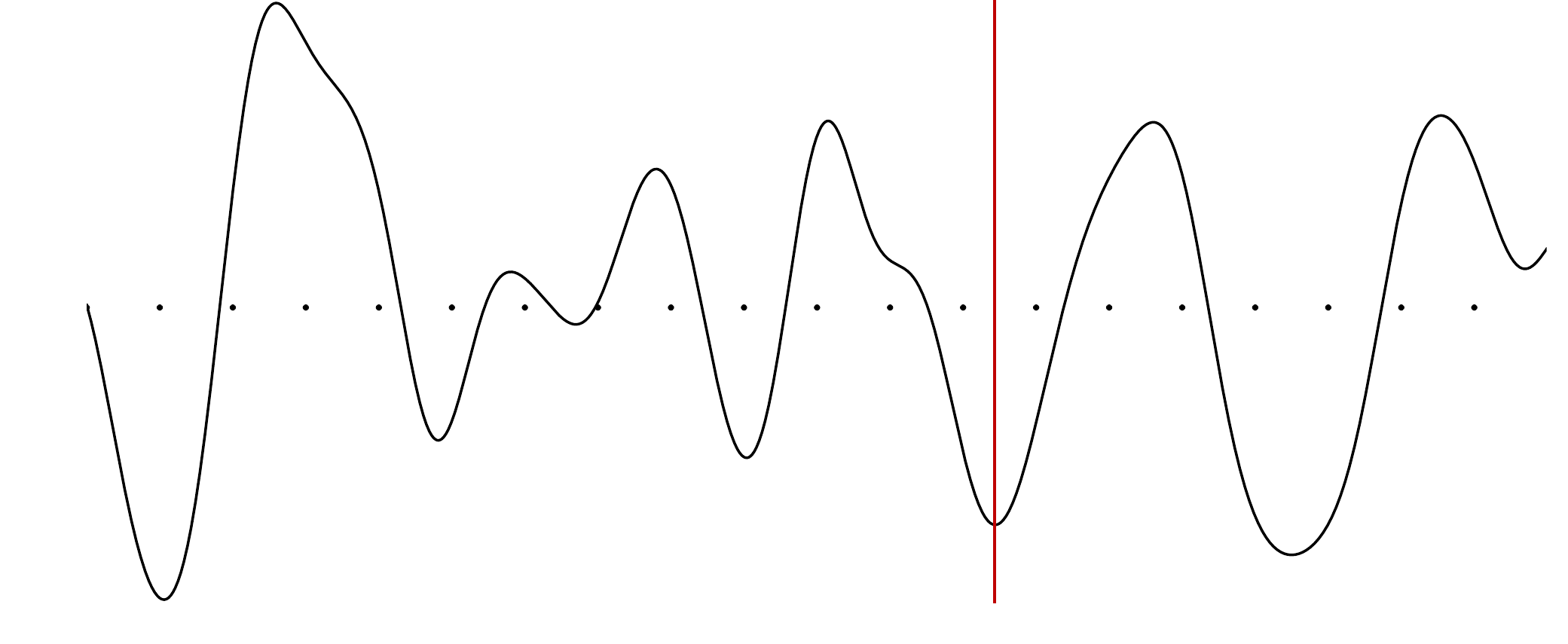}}
   \subfigure[Loss function with reparametrization]{
     \includegraphics[width=.45\textwidth]{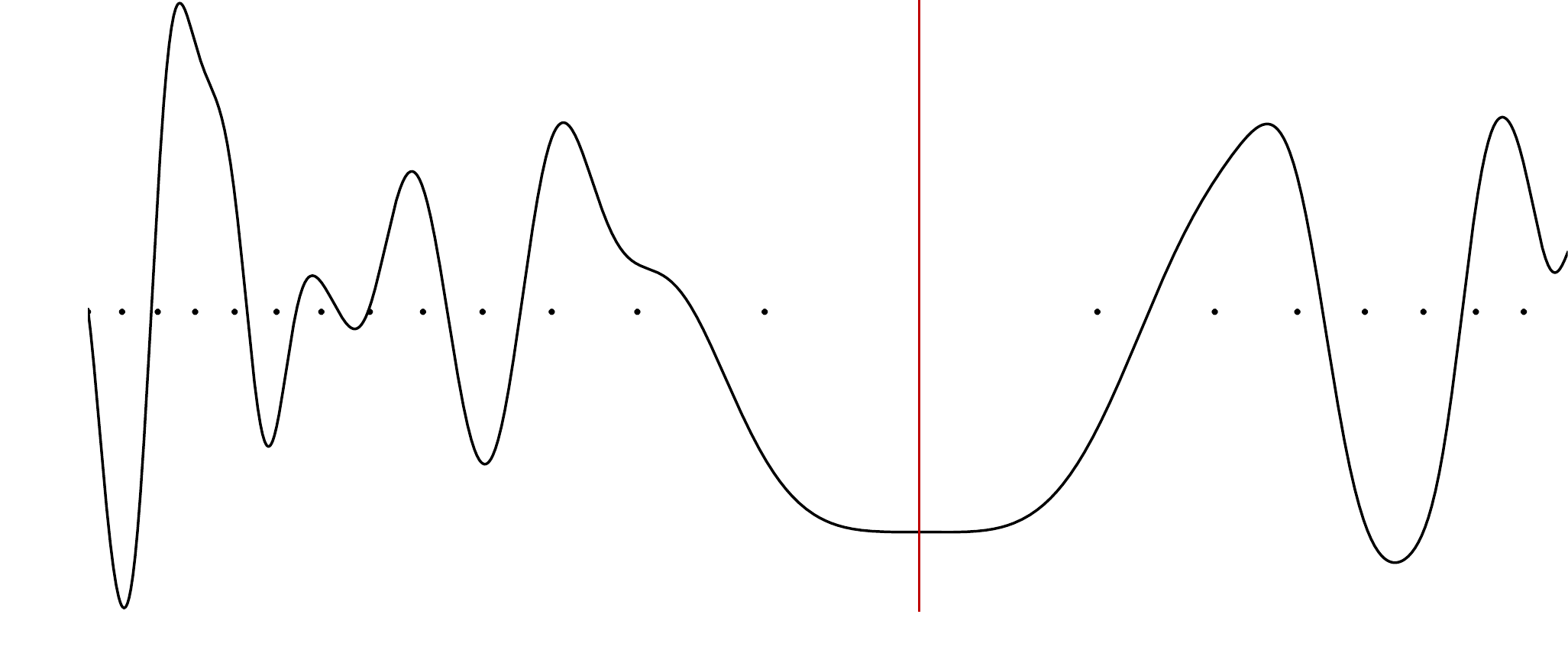}}
   \subfigure[Loss function with another reparametrization]{
     \includegraphics[width=.45\textwidth]{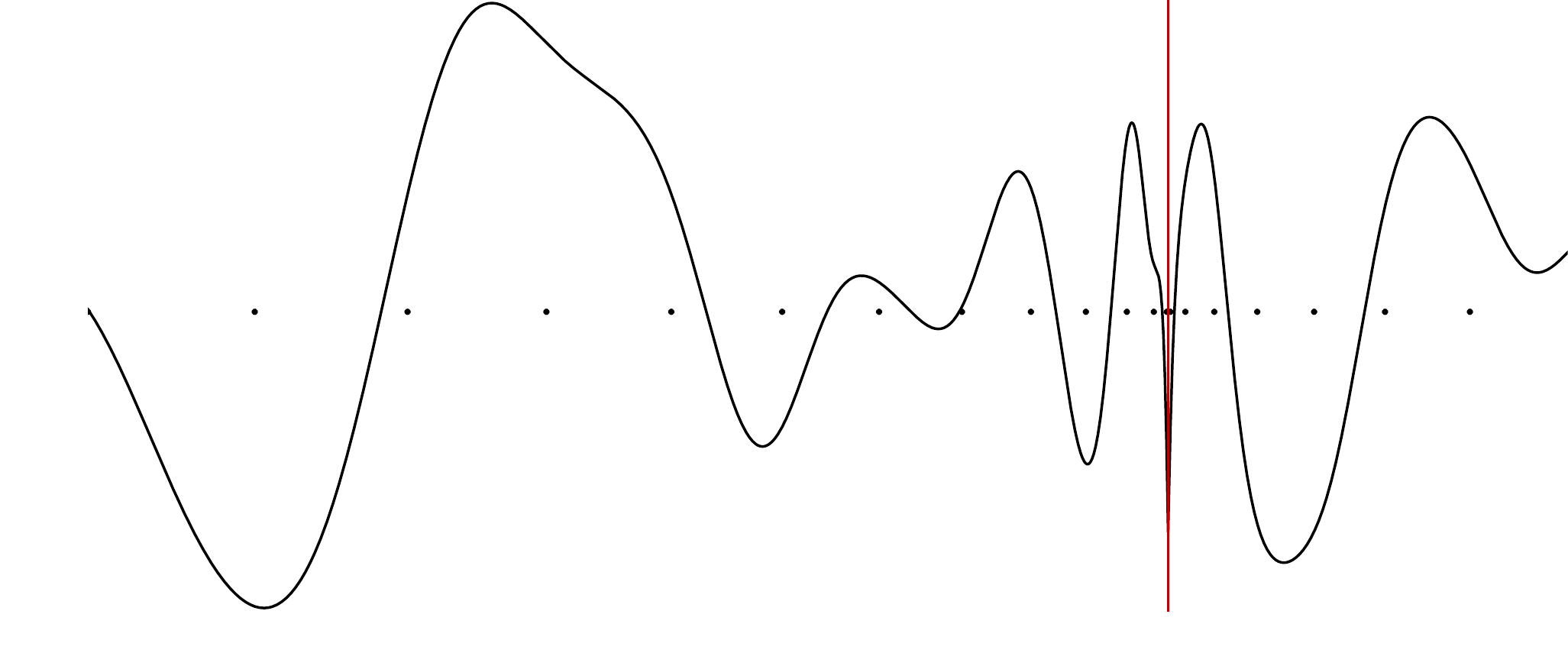}}
   \caption{A one-dimensional example on how much the geometry of the loss
    function depends on the parameter space chosen. The $x$-axis is the
    parameter value and the $y$-axis is the loss. The points correspond to a
    regular grid in the default parametrization. In the default
    parametrization, all minima have roughly the same curvature but with a
    careful choice of reparametrization, it is possible to turn a minimum
    significantly flatter or sharper than the others. Reparametrizations in
    this figure are of the form $\eta = (|\theta - \hat{\theta}|^2 +
    b)^{a}(\theta - \hat{\theta})$ where $b \geq 0, a > -\frac{1}{2}$ and $\hat{\theta}$
    is shown with the red vertical line.
    }
   \label{fig:stretch}
   \vspace{15pt}
\end{figure}

Let's call $L_{\eta} = L \circ g$ the loss function with respect to the new
parameter $\eta$. We generalize the derivation of
Subsection~\ref{sec:gradient-hessian}:
\begin{align*}
L_{\eta}(\eta) &= L\big(g(\eta)\big) \\
\Rightarrow (\nabla L_{\eta})(\eta) &= (\nabla L)\big(g(\eta)\big) (\nabla g)(\eta) \\
\Rightarrow (\nabla^2 L_{\eta})(\eta) &= (\nabla g)(\eta)^T (\nabla^2 L)\big(g(\eta)\big) (\nabla g)(\eta) \\
& ~~~~~~ + (\nabla L)\big(g(\eta)\big) (\nabla^2 g)(\eta).
\end{align*}
At a differentiable critical point, we have by definition $(\nabla L)\big(g(\eta)\big) = 0$, therefore the transformed Hessian at a critical point becomes
\begin{align*}
(\nabla^2 L_{\eta})(\eta) = (\nabla g)(\eta)^T (\nabla^2 L)\big(g(\eta)\big) (\nabla g)(\eta).
\end{align*}This means that by reparametrizing the problem we can modify to a large extent the geometry of the loss function so as to have sharp minima of $L$ in $\theta$ correspond to flat minima of $L_{\eta}$ in $\eta = g^{-1}(\theta)$ and conversely. Figure~\ref{fig:stretch} illustrates that point in one dimension. Several practical~\citep{dinh2014nice,rezende2015variational,kingma2016improving,dinh2016density} and theoretical works~\citep{hyvarinen1999nonlinear} show how powerful bijections can be. We can also note that the formula for the transformed Hessian at a critical point also applies if $g$ is not invertible, $g$ would just need to be surjective over $\Theta$ in order to cover exactly the same family of prediction functions
\begin{align*}
\{f_{\theta}, \theta \in \Theta \} = \{ f_{g(\eta)}, \eta \in g^{-1}(\Theta)\} .
\end{align*}
We show in Appendix~\ref{appendix:radial}, bijections that allow us to perturb the relative flatness between a finite number of minima.

Instances of commonly used reparametrization are \emph{batch
normalization}~\citep{ioffe2015batch}, or the \emph{virtual batch
normalization} variant~\cite{salimans2016improved}, and \emph{weight
normalization}~\citep{badrinarayanan2015understanding,salimans2016weight,arpit2016normalization}. \citet{im2016empirical} have plotted
how the loss function landscape was affected by batch normalization. However,
we will focus on weight normalization reparametrization as the
analysis will be simpler, but the intuition with batch normalization will be
similar. Weight normalization reparametrizes a nonzero weight $w$ as $w = s
\frac{v}{\|v\|_2}$ with the new parameter being the scale $s$ and the
unnormalized weight $v \neq 0$.

Since we can observe that $w$ is invariant to scaling of $v$, reasoning
similar to Section~\ref{sec:deep-relu} can be applied with the simpler
transformations $T'_{\alpha}: v \mapsto \alpha v$ for $\alpha \neq 0$. Moreover,
since this transformation is a simpler isotropic scaling, the conclusion that
we can draw can be actually more powerful with respect to $v$:
\setlist{nolistsep}
\begin{itemize}
\setlength\itemsep{0.5em}
\item every minimum has infinite volume $\epsilon$-sharpness;
\item every minimum is observationally equivalent to an infinitely sharp minimum and to an infinitely flat minimum when considering nonzero eigenvalues of the Hessian;
\item every minimum is observationally equivalent to a minimum with arbitrarily low full-space and random subspace $\epsilon$-sharpness and a minimum with high full-space $\epsilon$-sharpness.
\end{itemize}
This further weakens the link between the flatness of a minimum and the generalization property of the associated prediction function when a specific parameter space has not been specified and explained beforehand.

\subsection{Input representation}
\label{sec:sensitivity}
As we conclude that the notion of flatness for a minimum in the loss function by itself is not sufficient to determine its generalization ability in the general case, we can choose to focus instead on properties of the prediction function instead. Motivated by some work in \emph{adversarial examples}~\citep{szegedy2013intriguing,goodfellow2014explaining} for deep neural networks, one could decide on its generalization property by analyzing the gradient of the prediction function on examples. Intuitively, if the gradient is small on typical points from the distribution or has a small Lipschitz constant, then a small change in the input should not incur a large change in the prediction.

But this infinitesimal reasoning is once again very dependent of the local geometry of the input space. For an invertible preprocessing $\xi^{-1}$, e.g. \emph{feature standardization}, \emph{whitening} or \emph{gaussianization}~\citep{NIPS2000_1856}, we will call $f_{\xi} = f \circ \xi$ the prediction function on the preprocessed input $u = \xi^{-1}(x)$. We can reproduce the derivation in Section~\ref{sec:reparametrization} to obtain
\begin{align*}
\frac{\partial f_{\xi}}{\partial u^{T}}\big(\xi(u)\big) = \frac{\partial f}{\partial x^{T}}\big(\xi(u)\big) \frac{\partial \xi}{\partial u^{T}}(u).
\end{align*}
As we can alter significantly the relative magnitude of the gradient at each point, analyzing the amplitude of the gradient of the prediction function might prove problematic if the choice of the input space have not been explained beforehand. This remark applies in applications involving images, sound or other signals with invariances~\citep{LarsenSW15}. For example, \citet{theis2015note} show for images how a small drift of one to four pixels can incur a large difference in terms of $\L_2$ norm.

\section{Discussion}

It has been observed empirically that minima found by standard deep learning
algorithms that generalize well tend to be flatter than found minima that did
not generalize well~\citep{chaudhari2016entropy,keskar2016large}. However, when
following several definitions of flatness, we have shown that the conclusion
that flat minima should generalize better than sharp ones cannot be applied as
is without further context. Previously used definitions fail to account
for the complex geometry of some commonly used deep
architectures. In particular, the non-identifiability of the model induced
by symmetries, allows one to alter the flatness of a minimum without affecting
the function it represents. Additionally the whole geometry of the error surface
with respect to the parameters can be changed arbitrarily under different
parametrizations.
In the spirit of~\cite{SwirszczLocalMinima16}, our work indicates that more care is needed
to define flatness to avoid degeneracies of the geometry of the model under
study. Also such a concept
can not be divorced from the particular parametrization of the model or
input space.

\ifdefined\isaccepted
\section*{Acknowledgements}
The authors would like to thank Grzegorz \'{S}wirszcz for an insightful discussion
of the paper, Harm De Vries, Yann Dauphin, Jascha
Sohl-Dickstein and César Laurent for useful discussions about optimization,
Danilo Rezende for explaining universal approximation using normalizing flows
and Kyle Kastner, Adriana Romero, Junyoung Chung, Nicolas Ballas, Aaron
Courville, George Dahl, Yaroslav Ganin, Prajit Ramachandran, Çağlar Gülçehre,
Ahmed Touati and the ICML reviewers for useful feedback.
\fi

\bibliography{local}
\bibliographystyle{icml2017}

\ifdefined\isaccepted
\appendix

\section{Radial transformations}
\label{appendix:radial}
We show an elementary transformation to locally perturb the geometry of a finite-dimensional vector space and therefore affect the relative flatness between a finite number minima, at least in terms of spectral norm of the Hessian.
We define the function:
\begin{align*}
\forall \delta > 0, \forall \rho \in ]0, \delta[, &\forall (r, \hat{r}) \in \R_{+} \times ]0, \delta[, \\
\psi(r, \hat{r}, \delta, \rho) = ~ &\mathbbm{1}\big(r \notin [0, \delta]\big) ~ r + \mathbbm{1}\big(r \in [0, \hat{r}]\big) ~ \rho ~ \frac{r}{\hat{r}} \\
&+ \mathbbm{1}\big(r \in ]\hat{r}, \delta]\big) ~ \Big((\rho - \delta) ~ \frac{r - \delta}{\hat{r} - \delta} + \delta\Big)\\
\psi'(r, \hat{r}, \delta, \rho) = ~ &\mathbbm{1}\big(r \notin [0, \delta]\big) + \mathbbm{1}\big(r \in [0, \hat{r}]\big) ~ \frac{\rho}{\hat{r}} \\
&+ \mathbbm{1}\big(r \in ]\hat{r}, \delta]\big) ~ \frac{\rho - \delta}{\hat{r} - \delta}
\end{align*}

\begin{figure}[t]
\vspace{15pt}
  \centering
   \subfigure[$\psi(r, \hat{r}, \delta, \rho)$]{
    \includegraphics[width=.3\textwidth]{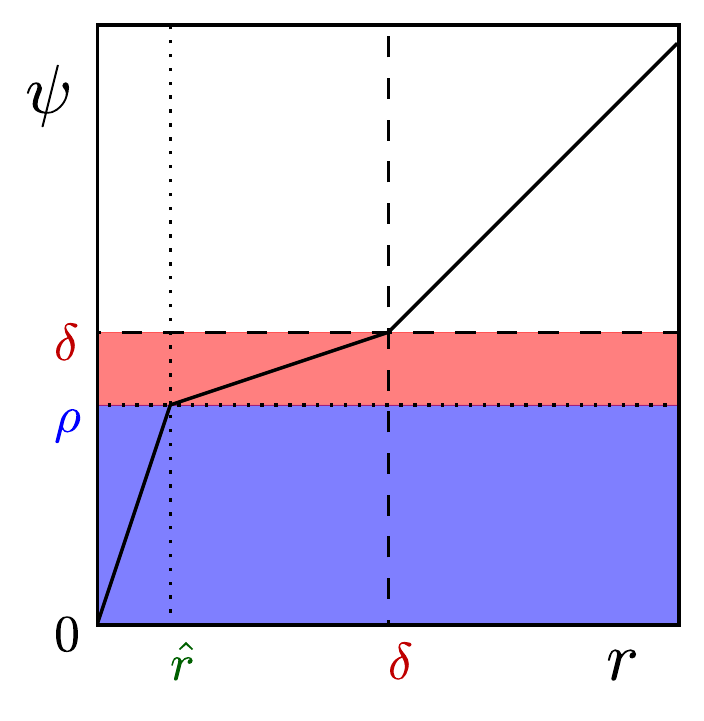}}
   \subfigure[$g^{-1}(\theta)$]{
    \includegraphics[width=.3\textwidth]{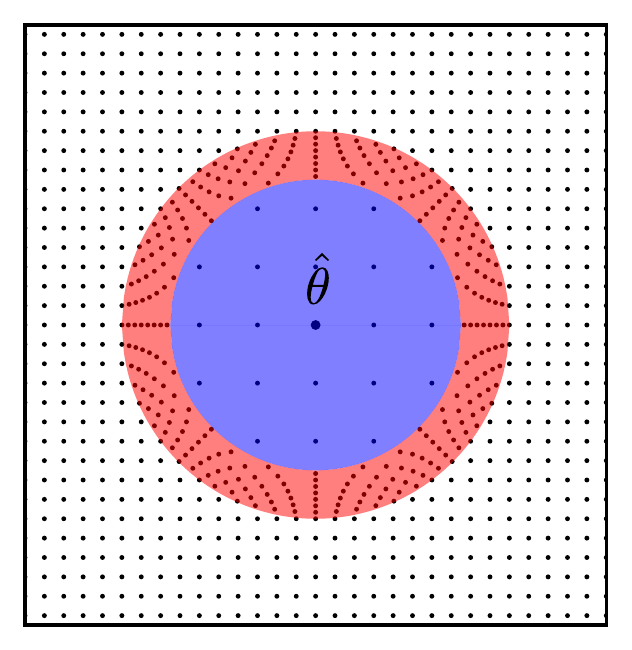}}
   \caption{An example of a radial transformation on a $2$-dimensional space. We can see that only the area in blue and red, i.e. inside $B_2(\hat{\theta}, \delta)$, are affected. Best seen with colors.}
   \label{fig:radial}
   \vspace{15pt}
\end{figure}

For a parameter $\hat{\theta} \in \Theta$ and $\delta > 0, \rho \in ]0, \delta[, \hat{r} \in ]0, \delta[$, inspired by the \emph{radial flows}~\citep{rezende2015variational} in we can define the \emph{radial transformations}
\begin{align*}
\forall \theta \in \Theta, ~ g^{-1}(\theta) =  \frac{\psi\Big(\|\theta - \hat{\theta}\|_2, \hat{r}, \delta, \rho\Big)}{\|\theta - \hat{\theta}\|_2} ~ \big(\theta - \hat{\theta}\big) + \hat{\theta}
\end{align*}
with Jacobian
\begin{align*}
\forall \theta \in \Theta, ~& (\nabla g^{-1})(\theta) =~ \psi'(r, \hat{r}, \delta, \rho) ~ \Id_{n} \\
&- \mathbbm{1}\big(r \in ]\hat{r}, \delta]\big) \frac{\delta (\hat{r} - \rho)}{r^3 (\hat{r} - \delta)} ~ (\theta - \hat{\theta})^T (\theta - \hat{\theta}) \\
&+ \mathbbm{1}\big(r \in ]\hat{r}, \delta]\big) \frac{\delta (\hat{r} - \rho)}{r (\hat{r} - \delta)} ~ \Id_{n},
\end{align*}
with $r = \|\theta - \hat{\theta}\|_2$.

First, we can observe in Figure~\ref{fig:radial} that these transformations are purely local: they only have an effect inside the ball $B_2(\hat{\theta}, \delta)$. Through these transformations, you can arbitrarily perturb the ranking between several minima in terms of flatness as described in Subsection~\ref{sec:model-reparam}.

\newpage
\section{Considering the bias parameter}
\label{app:bias}
When we consider the bias parameter for a one (hidden) layer neural network, the non-negative homogeneity property translates into
\begin{align*}
y &= \phi_{rect}(x \cdot \theta_{1} + b_1) \cdot \theta_{2} + b_{2} \\
&= \phi_{rect}(x \cdot \alpha \theta_{1} + \alpha b_1) \cdot \alpha^{-1} \theta_{2} + b_{2},
\end{align*}
which results in conclusions similar to section~\ref{sec:flat}.

For a deeper rectified neural network, this property results in
\begin{align*}
y &= \phi_{rect}\Big(\phi_{rect}\big(\cdots \phi_{rect}(x \cdot \theta_{1} + b_1) \cdots \big) \cdot \theta_{K - 1} + b_{K-1}\Big) \\
&~~~~ \cdot \theta_K + b_K \\
&= \phi_{rect}\Big(\phi_{rect}\big(\cdots \phi_{rect}(x \cdot \alpha_1 \theta_{1} + \alpha_1 b_1) \cdots \big) \\
&~~~~ \cdot \alpha_{K - 1} \theta_{K - 1} + \prod_{k=1}^{K-1}{\alpha_k} b_{K-1}\Big) \cdot \alpha_{K} \theta_K + b_K
\end{align*}
for $\prod_{k=1}^{K}{\alpha_k} = 1$. This can decrease the amount of eigenvalues of the Hessian that can be arbitrarily influenced.

\section{Rectified neural network and Lipschitz continuity}
Relative to recent works~\citep{hardt2015train, gonen2017fast} assuming
\emph{Lipschitz continuity} of the loss function to derive uniform stability bound,
we make the following observation:
\begin{theorem}
For a one-hidden layer rectified neural network of the form
\begin{align*}
y = \phi_{rect}(x \cdot \theta_{1}) \cdot \theta_{2},
\end{align*}
if $L$ is not constant, then it is not Lipschitz continuous.
\end{theorem}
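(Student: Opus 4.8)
The goal is to show that if the loss $L$ associated with a one-hidden-layer rectified network $y = \phi_{rect}(x \cdot \theta_1) \cdot \theta_2$ is non-constant, then $L$ cannot be Lipschitz continuous as a function of $\theta = (\theta_1, \theta_2)$. The plan is to exploit exactly the non-identifiability machinery already developed in Section~\ref{sec:flat}: the $\alpha$-scale transformations $T_\alpha$ fix the represented function but move the parameter arbitrarily, and the curvature (Hessian) blows up along the orbit $\{T_\alpha(\theta)\}$. Since $L$ is non-constant, some minimum $\theta$ has strictly positive loss in a neighborhood, and by the earlier theorem on the sharpest direction we can make the Hessian spectral norm at $T_\alpha(\theta)$ arbitrarily large. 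A function with a Lipschitz \emph{gradient} would have uniformly bounded Hessian, so if we can bootstrap from large Hessian to large gradient we are done; but the cleaner route, which I would follow, avoids differentiability assumptions entirely.

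The concrete argument I would run is a direct one. First I would pick $\theta = (\theta_1, \theta_2)$ with $\theta_1 \neq 0$, $\theta_2 \neq 0$, and with $L(\theta) \neq L(0,\theta_2)$; such a choice exists precisely because $L$ is non-constant and because the parameters with $\theta_1 = 0$ all yield the constant zero function. Next I would consider the family of observationally equivalent points $T_\alpha(\theta) = (\alpha\theta_1, \alpha^{-1}\theta_2)$ for $\alpha > 0$, all of which satisfy $L(T_\alpha(\theta)) = L(\theta)$. As $\alpha \to 0^+$, the first block $\alpha\theta_1 \to 0$, so $T_\alpha(\theta) \to (0, \alpha^{-1}\theta_2)$ in its first coordinate while the second coordinate escapes to infinity. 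The key estimate is to compare $L$ at $T_\alpha(\theta)$ against $L$ at a nearby point on the hyperplane $\{\theta_1' = 0\}$: I would take the point $P_\alpha = (0, \alpha^{-1}\theta_2)$, which represents the constant function and hence has a fixed loss value $L_0$ independent of $\alpha$. Then
\begin{align*}
\big|L(T_\alpha(\theta)) - L(P_\alpha)\big| = \big|L(\theta) - L_0\big| =: c > 0
\end{align*}
is a \emph{constant} gap, whereas the distance between these two points is
\begin{align*}
\|T_\alpha(\theta) - P_\alpha\|_2 = \|(\alpha\theta_1, 0)\|_2 = \alpha \|\theta_1\|_2 \xrightarrow[\alpha\to 0^+]{} 0.
\end{align*}
Therefore the difference quotient $|L(T_\alpha(\theta)) - L(P_\alpha)| / \|T_\alpha(\theta) - P_\alpha\|_2 = c / (\alpha\|\theta_1\|_2) \to +\infty$, which directly contradicts the existence of any finite Lipschitz constant.

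The main obstacle is the book-keeping around the constant $c$: I must be certain that the loss gap $|L(\theta) - L_0|$ is genuinely nonzero and that $L_0$ (the loss of the constant function associated with $\theta_1 = 0$) is well defined and $\alpha$-independent. This hinges on the structural fact, noted in the excerpt, that $\theta_1 = 0$ forces $y \equiv 0$ regardless of $\theta_2$, so $L(0, \theta_2') = L_0$ for every $\theta_2'$; any non-constant $L$ then admits a point $\theta$ with $L(\theta) \neq L_0$, giving $c > 0$. The remaining verification — that $P_\alpha$ and $T_\alpha(\theta)$ are the claimed points and that the Euclidean distance between them is exactly $\alpha\|\theta_1\|_2$ — is immediate from the block structure of $T_\alpha$. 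I would present the argument as a clean \emph{reductio}: assume $L$ is $\kappa$-Lipschitz, produce the sequence $\alpha_n \to 0^+$ along which the difference quotient exceeds $\kappa$, and conclude a contradiction, thereby establishing that a non-constant $L$ on this architecture is never Lipschitz continuous.
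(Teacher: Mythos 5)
Your proof is correct, but it takes a genuinely different route from the paper's. The paper argues through differentiability: it notes that Lipschitz continuity implies absolute continuity (hence a.e.\ differentiability), splits into cases --- either the gradient vanishes almost everywhere, forcing $L$ to be constant, or there is a point $\theta$ where, say, $(\nabla_{\theta_1} L)(\theta) \neq 0$ --- and then uses the transformation rule
$(\nabla L)(\alpha\theta_1, \alpha^{-1}\theta_2) = \big[\alpha^{-1}(\nabla_{\theta_1}L)(\theta),\ \alpha(\nabla_{\theta_2}L)(\theta)\big]$
to show that the gradient norm blows up as $\alpha \to 0^{+}$, contradicting the bounded-gradient property of Lipschitz functions. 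You instead work directly from the definition of Lipschitz continuity: you pick $\theta$ with $L(\theta) \neq L_0$, where $L_0$ is the loss of the identically-zero prediction function attained on the whole slice $\{\theta' : \theta'_1 = 0\}$ (such $\theta$ exists precisely by non-constancy, and automatically has $\theta_1 \neq 0$, $\theta_2 \neq 0$), then observe that $T_\alpha(\theta)$ keeps the loss value $L(\theta)$ fixed while its distance to $P_\alpha = (0, \alpha^{-1}\theta_2)$ equals $\alpha\|\theta_1\|_2 \to 0$, so the difference quotient $c/(\alpha\|\theta_1\|_2)$ with $c = |L(\theta)-L_0| > 0$ is unbounded. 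Your route is more elementary: it needs no measure-theoretic input (absolute continuity, a.e.\ differentiability, or the step ``zero gradient a.e.\ implies constant''), and it exhibits explicit pairs of points defeating any candidate Lipschitz constant $\kappa$; note there is no tension with continuity of $L$, since both points escape to infinity as $\alpha \to 0^{+}$. What the paper's version buys in exchange is the quantitative fact that the gradient norm itself is unbounded along the orbit of $T_\alpha$, which dovetails with its gradient/Hessian manipulations elsewhere in Section~\ref{sec:gradient-hessian}. Both proofs ultimately rest on the same two structural facts: invariance of $L$ under the $\alpha$-scale transformations and the degeneracy of the $\theta_1 = 0$ slice.
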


\begin{proof}
Since a Lipschitz function is necessarily absolutely continuous, we will consider the cases where $L$ is absolutely continuous. First, if $L$ has zero gradient almost everywhere, then $L$ is constant.

Now, if there is a point $\theta$ with non-zero gradient, then by writing
\begin{align*}
  (\nabla L)(\theta_1, \theta_2) = [&(\nabla_{\theta_1} L)(\theta_1, \theta_2)\\
  &(\nabla_{\theta_2} L)(\theta_1, \theta_2)],
\end{align*}
we have
\begin{align*}
  (\nabla L)(\alpha \theta_{1}, \alpha^{-1} \theta_{2}) = [\alpha^{-1} &(\nabla_{\theta_1} L)(\theta_1, \theta_2) \\
  \alpha &(\nabla_{\theta_2} L)(\theta_1, \theta_2)].
\end{align*}
Without loss of generality, we consider $(\nabla_{\theta_1} L)(\theta_1, \theta_2) \neq 0$.
Then the limit of the norm
\begin{align*}
  \| (\nabla L)(\alpha \theta_{1}, \alpha^{-1} \theta_{2}) \|_{2}^{2} = ~&\alpha^{-2} \|(\nabla_{\theta_1} L)(\theta_1, \theta_2)\|_{2}^{2} \\
  & + \alpha^{2} \|(\nabla_{\theta_2} L)(\theta_1, \theta_2)\|_{2}^{2}
\end{align*}
of the gradient goes to $+\infty$ as $\alpha$ goes to $0$.
Therefore, $L$ is not Lipschitz continuous.
\end{proof}
This result can be generalized to several other models containing a one-hidden
layer rectified neural network, including deeper rectified networks.

\newpage
\section{Euclidean distance and input representation}
A natural consequence of Subsection~\ref{sec:sensitivity} is that metrics relying on Euclidean metric like \emph{mean square error} or \emph{Earth-mover distance} will rank very differently models depending on the input representation chosen. Therefore, the choice of input representation is critical when ranking different models based on these metrics. Indeed, bijective transformations as simple as \emph{feature standardization} or \emph{whitening} can change the metric significantly.

On the contrary, ranking resulting from metrics like \emph{f-divergence} and \emph{log-likelihood} are not perturbed by bijective transformations because of the \emph{change of variables formula}.
\fi
\end{document}